\newtheorem{problem}{Problem}
\newtheorem{corollary}{Corollary}
\newtheorem{theorem}{Theorem}
\newtheorem{lemma}{Lemma}
\newcommand{\argmax}{\mathop{\rm argmax}\limits}
\long\def\pprintMaketitlex{\clearpage
  \iflongmktitle\if@twocolumn\let\columnwidth=\textwidth\fi\fi
  \resetTitleCounters
  \def\baselinestretch{1}%
  \printFirstPageNotes
  \begin{\elsarticletitlealign}%
 \thispagestyle{pprintTitle}%
   \def\baselinestretch{1}%
    \Large\@title\par\vskip18pt%
    \ifx\@elsarticlenewpageafter\newpage@after@title
      \newpage
    \fi%
    \ifdoubleblind
      \vspace*{2pc}
    \else
      \normalsize\elsauthors\par\vskip10pt
      \footnotesize\itshape\elsaddress\par\vskip36pt
    \fi
    \ifx\@elsarticlenewpageafter\newpage@after@author
      \newpage
    \fi%
    \ifx\@elsarticlenewpageafter\newpage@after@abstract
      \newpage
    \fi%
    \end{\elsarticletitlealign}%
    \gdef\thefootnote{\arabic{footnote}}%
}
\def\ps@pprintTitle{%
     \let\@oddhead\@empty
     \let\@evenhead\@empty
     \def\@oddfoot
       {\hbox to \textwidth%
        {\ifnopreprintline\relax\else
        \@myfooterfont%
         \ifx\@elsarticlemyfooteralign\@elsarticlemyfooteraligncenter%
           \hfil\@elsarticlemyfooter\hfil%
         \else%
         \ifx\@elsarticlemyfooteralign\@elsarticlemyfooteralignleft%
           \@elsarticlemyfooter\hfill{}%
         \else%
         \ifx\@elsarticlemyfooteralign\@elsarticlemyfooteralignright%
           {}\hfill\@elsarticlemyfooter%
         \else%
         \iffalse  
               Preprint submitted to \ifx\@journal\@empty%
               Elsevier%
         \fi
         \else\@journal\fi\hfill\@date\fi%
         \fi%
         \fi%
         \fi%
         }
       }%
     \let\@evenfoot\@oddfoot}
\begin{document}
\begin{frontmatter}



\title{Gaussian Process Classification Bandits}


\author[H]{Tatsuya Hayashi}
\author[H]{Naoki Ito\fnref{nicurad}}
\fntext[nicurad]{Present Address: NTT DATA Corporation\\Toyosu Center Building, 3-3, Toyosu 3-chome, Koto-ku, 135-6033, Tokyo, Japan}
\author[R]{Koji Tabata}
\author[H]{Atsuyoshi Nakamura\corref{cor1}}
\ead{atsu@ist.hokudai.ac.jp}
\cortext[cor1]{Corresponding author}
\author[O]{Katsumasa Fujita}
\author[K]{Yoshinori Harada}
\author[R]{Tamiki Komatsuzaki}


\affiliation[H]{organization={Graduate School of Information Science and Technology, Hokkaido University},
            addressline={\\Kita 14, Nishi 9, Kita-ku}, 
            city={Sapporo},
            postcode={060-0814}, 
            state={Hokkaido},
            country={Japan}}

\affiliation[R]{organization={Research Institute for Electronic Science, Hokkaido University},
            addressline={\\Kita 10, Nishi 8, Kita-ku}, 
            city={Sapporo},
            postcode={060-0810}, 
            state={Hokkaido},
            country={Japan}}

\affiliation[O]{organization={Department of Applied Physics, Osaka University},
            addressline={\\2-1 Yamadaoka}, 
            city={Suita},
            postcode={565-0871}, 
            state={Osaka},
            country={Japan}}

\affiliation[K]{organization={Graduate School of Medical Science, Kyoto Prefectural University of Medicine},
            addressline={\\465 Kajii-cho}, 
            city={Kamigyo-ku},
            postcode={602-8566}, 
            state={Kyoto},
            country={Japan}}
\begin{abstract}
Classification bandits are multi-armed bandit problems whose task is to classify a given set of arms into either positive or negative class
depending on whether the rate of the arms with the expected reward of at least $h$ is not less than $w$ for given thresholds $h$ and $w$.
We study a special classification bandit problem in which arms correspond to points $x$ in $d$-dimensional real space with expected rewards $f(x)$ which are generated according to a Gaussian process prior. We develop a framework algorithm for the problem using various arm selection policies 
and propose policies called FCB and FTSV.
We show a smaller sample complexity upper bound for FCB than that for the existing algorithm of the level set estimation, in which whether $f(x)$ is at least $h$ or not must be decided for every arm's $x$. 
Arm selection policies depending on an estimated rate of arms with rewards of at least $h$ are also proposed and shown to 
improve empirical sample complexity.
According to our experimental results, the rate-estimation versions of FCB and FTSV, together with that of the popular active learning policy that selects the point with the maximum variance, outperform other policies for synthetic functions, and
the version of FTSV is also the best performer for our real-world dataset.
\end{abstract}

\begin{keyword}
Bandit problem \sep Gaussian process \sep Classification bandits \sep Level set estimation


\end{keyword}

\end{frontmatter}


\section{Introduction}
Classification is a task that classifies a given instance into one of the pre-defined classes.
Anomaly detection from image data can be seen as a classification task in which the data is classified into normal and anomaly.
In some anomaly detection such as disease diagnoses, anomaly class can be defined by bad area ratio: the data is an anomaly if and only if the bad area ratio of the data is at least a certain threshold. In some cases like disease diagnoses by Raman spectra \citep{khalifa2019}, we have to measure the badness of each point in a given region one by one
instead of a one-shot measurement of the whole region.
In this paper, we study the fast classification of a given region by its bad area ratio through measuring the badness of a point one by one iteratively. 

Let a point $x$ in a given region of $d$-dimensional real space $\mathbb{R}^d$ be bad if and only if the badness $f(x)$ of $x$ is at least a given threshold, and assume that we can know noisy $f(x)$ by measuring at point $x$.
We also assume that the bad area ratio can be approximated by the bad point ratio in the finite point set $D$ that is composed of (coarsely) quantized points in the given bounded region $X\subset \mathbb{R}^d$. 
To introduce correlation between $f(x)$ of different points, $f(x)$ for all $x\in D$ are assumed to be generated by the Gaussian process prior with mean $0$ and some appropriate kernel.
Correlation brought by the Gaussian process assumption makes correct judgment possible with
high probability by measuring only at a part of the whole point set.
The level set estimation problem studied by \citet{LSE} is a problem of this setting though the objective is the discrimination of 
whether each point in the whole set is bad or not.
In our classification setting, we do not have to know the badness of every point in the whole set to judge whether the bad area ratio is at least a given threshold or not, and thus an earlier stopping time can be realized.

Since we only consider a finite point set $D$ in a given region, the problem we deal with here is just a multi-armed classification bandit problem studied by \citet{Tabata2021}.
In the paper, however, no correlation between arms' rewards  is assumed.
By making use of the correlation between points' rewards in our problem setting, a reduction of the sample complexity can be expected.
The result of our experiment using a real-world dataset shows the effectiveness of using the correlation between points' rewards.

We propose Algorithm GPCB[ASP] (Gaussian Process Classification Bandits [Arm Selection Policy]),
in which any arm selection policy function ASP can be used.
We prove the correctness of GPCB[ASP] for any ASP as a solution for our PAC-like\footnote{PAC is an abbreviation for Probably Approximately Correct.} problem: for given $\delta,\varepsilon>0$ and given thresholds $h, w$,
 decide whether
the rate of points with badness at least $h+\varepsilon$ is at least $w$ or
 the rate of points with badness less than $h-\varepsilon$ is more than $1-w$, correctly with probability at least $1-\delta$ when one of them is true.
 For the PAC-like problem, we show a high-probability sample complexity upper bound $C/(\Delta_w+\varepsilon)^{2+o(1)}$ of GPCB[FCB] using a specific ASP called FCB (Farthest Confidence Bound), where $C$ is a constant value that depends on the dimension $d$ and the adopted kernel in Gaussian process, and $\Delta_w$ is the difference between $\lceil w|D|\rceil$th largest badness value $f(x)$ $(x\in D)$ and threshold $h$.
 This upper bound is smaller than the sample complexity upper bound $C/(2\varepsilon)^{2+o(1)}$ of the LSE algorithm
 for the harder level set estimation problem in the case with $\Delta_w>\varepsilon$. 
 Furthermore, we prove that GPCB[ASP] which is modified for the level set estimation and uses an ASP of always selecting an uncertain arm, achieves the same sample complexity upper bound $C/(2\varepsilon)^{2+o(1)}$ as the LSE algorithm,
 which implies the same sample complexity upper bound of GPCB[ASP] for the easier classification bandits.
As arm selection policy functions, we also propose \emph{rate estimation} versions of ASPs that use the estimation $\hat{r}$ of rate $|H_h|/|D|$, and narrow the candidates to $\tilde{H}$
if $\hat{r}\geq w$ and $D\setminus\tilde{H}$ otherwise, where $H_h$ is a set of points whose badness is at least $h$, and $\tilde{H}$ is
the estimated set of $H_h$.

We check empirical sample complexities of GPCB[ASP] for various ASPs, including their rate estimation versions using two types of synthetic functions (GP-prior-generated functions and benchmark functions) and a real-world dataset. 
For synthetic functions, GPCB[ASP]s with ASP$=$FCB and FTSV (Farthest Thompson Sampling Value) outperform those with LSE (Level Set Estimation) \citep{LSE} and STR (STRaddle heuristic) \citep{Bryan2006}, where LSE and STR are state-of-the-art arm selection policies used for the level set estimation.
Performance of the rate estimation version ASP$_{\text{RE}}$ of an ASP significantly improves for FCB, FTSV, and VAR (VARiance), where VAR is a popular active learning policy that selects the point with the maximum variance.  As a result, GPCB[FCB$_{\text{RE}}$], GPCB[FTSV$_{\text{RE}}$], and GPCB[VAR$_{\text{RE}}$] are the best performers for  both types of synthetic functions.
As the real-world dataset, we use cancer index images that are made from Raman images of cancer and non-cancer thyroid follicular cells.
Our algorithm GPCB[ASP] correctly classifies all nine images into cancer  and non-cancer ones by iteratively asking cancer indices of less than 0.95\% points for all the ASPs we used. In contrast, cancer indices of at least 1.6\% points are required for the existing classification bandit algorithm \citep{Tabata2021}.
GPCB[FTSV$_{\text{RE}}$] is the best performer for this real-world dataset.

\subsection*{Related Work}

This research is categorized into stochastic bandit problem \citep{auer2002finite}
of pure exploration \citep{bubeck2009} with fixed confidence setting \citep{even-dar2002}.
Both exploration and exploitation are needed for the objective of cumulative reward maximization,
but exploration only is needed in the case that algorithms are evaluated by their final outputs only, and such kinds of problems are called pure exploration problems.
The most popular pure exploration bandit problem is the best arm identification \citep{audibert2010}, and there is its variation that identifies all the \emph{above-threshold arms}, that is, the arms with a mean reward more than a given threshold \citep{Locatelli,kano}.
For a given reward threshold, instead of identifying all the above-threshold arms, the decision bandit problem of whether or not an above-threshold arm exists,  was
studied in \citep{Kaufmann2018,Tabata2020}.
Recently, an extended decision bandit problem called classification bandits has been proposed \citep{Tabata2021}.
In classification bandits, a threshold on the number of above-threshold arms is introduced, and the problem is to decide whether the number of above-threshold arms
in a given set of arms is more than the introduced threshold or not.
Classification bandit problem can be seen as a kind of more general active sequential hypothesis testing problem,
which was studied and nice theoretical analyses were provided by \citet{DK2019,garivier2021}.
In this paper, we study the classification bandit problem with finite arms
that correspond to points in $\mathbb{R}^d$ with correlated rewards depending on their positions.

In the context of active learning, the problem of level set estimation is studied.
The level set estimation is a problem to determine whether an unknown function value at $x$ is above or below some given threshold for all the points $x$ in a given set $D\in \mathbb{R}^d$
by iteratively asking noisy function values.
In \citep{Bryan2006,LSE}, the Gaussian process prior is used as a correlation assumption for function values,
and \citet{LSE} proved the sample complexity upper bound of their proposed algorithm LSE and reported the F1-scores of LSE and STR better than that of VAR (VARiance),
where STR is a straddle heuristic proposed by \citet{Bryan2006} and VAR is a popular active learning method that selects the maximum variance points.
The problem we deal with can be seen as an easier variation of the level set estimation in which it is only checked whether the rate of points with above-threshold function value is above a given threshold or not. We prove a better sample complexity upper bound for this variation.

\section{Problem Settings}

For a bounded set $X\subset \mathbb{R}^d$, assume the existence of an unknown function $f : X \to \mathbb{R}$.
What we want to know is whether the ratio of the area $f(x)\geq h$ is at least $w$ or not for given two thresholds $h,w\in \mathbb{R}$.
Here, we assume that the ratio can be approximately estimated by the ratio of those in a finite set $D$ of points in $X$,
and formulate the problem for $D\subset X\subset \mathbb{R}^d$.
Observable information for answering the question is noisy function values $y=f(x)+\eta$ for requested points $x\in D$, where $\eta\in \mathbb{R}$ is assumed to be generated according to a normal distribution $N(0,\sigma^2)$.
Our objective is to answer the question correctly with high probability by asking as a small number of noisy function values
as possible.
In this setting, it is difficult to correctly answer whether $f(x)\geq h$ or not for the point $x$ whose function value is close to the threshold $h$. Thus, by introducing a margin $\varepsilon$, we consider the problem of
whether the ratio of the area $f(x)\geq h+\varepsilon$ is at least $w$ or not.
We formally define this problem as follows.

\begin{problem}[Classification Bandits with Margin $\epsilon$]\label{prob:CBwM}
  Given $\varepsilon>0$ and $\delta>0$, output ``positive'' with probability at least $1-\delta$
  if $|H_{h+\varepsilon}|/|D|\geq w$
    and output ``negative'' if $|L_{h-\varepsilon}|/|D|> 1-w$
      by asking as small number of noisy function values of $f$ as possible,
      where $H_{h+\varepsilon}=\{x\in D\mid f(x)\geq h+\varepsilon\}$, $L_{h-\varepsilon}=\{x\in D\mid f(x)< h-\varepsilon\}$, and $|\cdot|$ denotes the number of elements in a set `$\cdot$'.
\end{problem}

Note that any output with any probability is allowed when $|H_{h+\varepsilon}|/|D|< w$ and $|L_{h-\varepsilon}|/|D|\leq 1-w$.

This bandit problem is a  pure exploration problem like the best arm identification.
This problem can be seen as a classification problem in which a set $D$ with a function $f$ is classified into ``positive'' or ``negative''. Thus, this kind of bandit problem is called \textit{classification bandits} \citep{Tabata2021}.
In this paper, we develop faster classification algorithms by assuming a correlation between  $f(x)$ of points in $D\subset X$. 

As the \emph{level set estimation} that \citet{LSE} studied,
we assume that target function values $f(x)\in D$ are generated from a Gaussian process (GP),
that is, 
a target function $f(\cdot)$ over $D$ is drawn according to $\mathrm{GP}(\mu(\cdot), k(\cdot,\cdot))$
with $\mu(x)=0$ for $x\in D$ and some kernel function $k(\cdot,\cdot)$.
Throughout this paper, we assume $k(x,x')\leq 1$ for $x,x'\in D$.
Given a list of $t$ noisy observations $\mathbf{y}_t = (y_1, \dots, y_t)^\mathsf{T}$ for a list of $t$ inputs $(x_1,\dots,x_t)$,
the posterior of $f(\cdot)$ is $\mathrm{GP}(\mu_t(\cdot),k_t(\cdot,\cdot))$ with $\mu_t$ and $k_t$ calculated as follows:
\begin{align}
\mu_t(x) &= {\mathbf{k}}_t(x)^\mathsf{T}({\mathbf{K}}_t+\sigma^2{\mathbf{I}})^{-1}\mathbf{y}_t, \label{mu_t}\\
k_t(x,x^\prime) &=k(x,x^\prime)-{\mathbf{k}}_t(x)^T({\mathbf{K}}_t+\sigma^2{\mathbf{I}})^{-1}{\mathbf{k}}_t(x), \text{ and} \label{k_t}\\
\sigma_t^2(x) &= k_t(x, x),
\end{align}
where ${\mathbf{k}}_t(x)=[k(x_1,x),\dots,k(x_t,x)]^\mathsf{T}$ and ${\mathbf{K}}_t=[\mathbf{k}_t(x_1),\dots,\mathbf{k}_t(x_t)]$.
What we call \emph{Gaussian process classification bandits} is Problem~\ref{prob:CBwM}  with this Gaussian process assumption.

The Gaussian process classification bandit problem is a problem easier than the level set estimation; in level set estimation, all the points in $H_{h+\varepsilon}\cup L_{h-\varepsilon}$ must be identified correctly with probability at least $1-\delta$, from which the answer of Problem~\ref{prob:CBwM} can be derived. For this easier problem, we show arm selection policies with
smaller sample complexities theoretically and empirically.

\section{Proposed Method}

\subsection{Algorithm GPCB[ASP]}

\begin{algorithm}[h!]
  \caption{GPCB[$\mathrm{ASP}$] ($D,k,\varepsilon,\delta,h,w$)}\label{cbca}
\begin{algorithmic}[1]
\renewcommand{\algorithmicrequire}{\textbf{Parameter:}}
	  \REQUIRE $\mathrm{ASP}$: Arm Selection Policy
\renewcommand{\algorithmicrequire}{\textbf{Input:}}
          \REQUIRE
          \begin{minipage}[t]{10cm}
          $D$: samplable point set,\\
          $k$: kernel of the GP prior,\\
          $\varepsilon$: accuracy parameter,\\
          $\delta$: confidence parameter,\\
          $h$: function value threshold,\\
          $w$: high-value ratio threshold
          \end{minipage}
 \renewcommand{\algorithmicrequire}{\textbf{function:}}
  \REQUIRE $\beta_t=2\log(|D|\pi^2t^2/6\delta)$
  \renewcommand{\algorithmicensure}{\textbf{Output:}}
          \ENSURE ``positive'' or ``negative''
		     \STATE $\hat{H}\gets \phi, \hat{L}\gets\phi, U_0\gets D$
                \STATE $C_0(x)\gets (-\infty,\infty), \mu_0(\cdot)\gets 0, k_0(\cdot,\cdot)\gets k(\cdot,\cdot)$
		\STATE $t\gets 1$
		\WHILE {$U_{t-1}\neq\phi$}
                \STATE $U_t\gets \emptyset$
		\FOR{all $x\in U_{t-1}$}
		\STATE $C_t(x)\gets C_{t-1}(x)\cap[\mu_{t-1}(x)-\sqrt{\beta_t}\sigma_{t-1}(x),\mu_{t-1}(x)\!+\!\sqrt{\beta_t}\sigma_{t-1}(x)]$
		\IF{$\min (C_t(x))\geq h-\varepsilon$}
		\STATE $\hat{H}\gets \hat{H}\cup \{x\}$\label{H_t}
		\ELSIF{$\max (C_t(x))< h+\varepsilon$}
		\STATE $\hat{L}\gets \hat{L}\cup \{x\}$\label{L_t}
                \ELSE
                \STATE $U_t\gets U_t\cup \{x\}$\label{U_t}
		\ENDIF
		\ENDFOR
		\IF {$|\hat{H}|/|D|\geq w$}\label{alg:stcheckcbb}
		\RETURN{``positive''}
		\ELSIF{$|\hat{L}|/|D|> 1-w$}
		\RETURN{``negative''}
		\ENDIF\label{alg:stcheckcbe}
		\STATE $x_t\gets \mathrm{ASP}(U_t,C_t,\mu_{t-1},k_{t-1},h)$\label{asp}
		\STATE Ask $f(x_t)$ and receive its noisy value $y_t$  
		\STATE Calculate $\mu_t$ and $k_t$ using Eqs. (\ref{mu_t}) and (\ref{k_t})
		\STATE $t\gets t+1$
		\ENDWHILE
	\end{algorithmic}
\end{algorithm}

For Gaussian process classification bandits, we propose an algorithm GPCB[ASP] (Gaussian Process Classification Bandits [Arm Selection Policy]) using any arm selection policy function ASP, which is shown in Algorithm~\ref{cbca}.
Function ASP, which is used in Line~\ref{asp}, selects the point $x_t\in D$ to ask its noisy function value $y_t$ at time step $t$.
GPCB[ASP] maintains a high-probability confidence interval $C_t(x)$ for each point $x\in D$ at each time step $t$ and puts $x$ into the
estimated high-value point set $\hat{H}$ if $\min C_t\geq h-\varepsilon$ (Line~\ref{H_t}), into the estimated low-value point set $\hat{L}$ if $\max C_t<h+\varepsilon$ (Line~\ref{L_t}), and into the uncertain point set $U_t$ otherwise (Line~\ref{U_t}).
GPCB[ASP] stops when $|\hat{H}|/|D|\geq w$ or $|\hat{L}|/|D|> 1-w$ holds, and outputs ``positive'' in the former case  and ``negative'' in the latter case.

Differences from the LSE algorithm \citep{LSE} are the following two points:(1) in addition to condition $U_{t-1}= \emptyset$, two other stopping conditions $|\hat{H}|/|D|\geq w$ and $|\hat{L}|/|D|> 1-w$ are checked for the realization of earlier stopping time in our easier problem setting, (2) any ASP can be used instead of fixing as ASP$=$LSE.

As the LSE algorithm \citep{LSE} for the level set estimation,
we adopt
\begin{equation}
C_t(x)=\bigcap_{i=1}^t[\mu_{i-1}(x)-\sqrt{\beta_i}\sigma_{i-1}(x),\mu_{i-1}(x)+\sqrt{\beta_i}\sigma_{i-1}(x)] \label{def:C_t}
\end{equation}
as the confidence interval $C_t(x)$ for the GP posterior $\mathrm{GP}(\mu_t(\cdot),k_t(\cdot,\cdot))$ using the GP prior $\mathrm{GP}(0,k(\cdot,\cdot))$,
where $\beta_t$, $\mu_0(x)$ and $\sigma_0(x)$ are set as $\beta_t=2\log(|D|\pi^2t^2/6\delta)$, $\mu_0(x)=0$ and $\sigma_0(x)=\sqrt{k(x,x)}$.

Using confidence intervals $C_t(x)$ defined by Eq.~(\ref{def:C_t}), the following corollary is known to hold. The corollary leads to Theorem~\ref{th:GPCB}
 which guarantees the correct output of GPCB[ASP] for any arm selection policy if it terminates.

\begin{corollary}[Corollary 1 in \citep{LSE}]\label{APcorrect-assumptionen}
  
  Let $\beta_t=2\log(|D|\pi^2t^2/6\delta)$ for $ \delta\in(0,1) $ and $t\geq 1$. Then,
	$$ f(x)\in C_t(x) \text{\ for all $x\in D$ and $t\geq 1$}$$
	holds with probability at least $1- \delta $.
\end{corollary}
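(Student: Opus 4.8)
The plan is to follow the standard Gaussian-tail-plus-union-bound argument (the same one underlying the GP-UCB analysis of Srinivas et al.\ and quoted as Corollary~1 of \citet{LSE}). First I would fix a point $x\in D$ and a time step $t\geq 1$ and observe that, conditioned on the $\sigma$-algebra $\mathcal{F}_{t-1}$ generated by the first $t-1$ query points $x_1,\dots,x_{t-1}$ and their noisy values $y_1,\dots,y_{t-1}$, the posterior law of $f(x)$ under the GP prior is exactly $N(\mu_{t-1}(x),\sigma_{t-1}^2(x))$, where $\mu_{t-1}$ and $\sigma_{t-1}$ are the quantities from Eqs.~(\ref{mu_t})--(\ref{k_t}), which are deterministic given $\mathcal{F}_{t-1}$. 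This remains valid even though the query points are chosen adaptively by $\mathrm{ASP}$, because $\mathrm{ASP}$'s choice at step $t$ is $\mathcal{F}_{t-1}$-measurable and the GP posterior update formulas hold for any fixed conditioning set.

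Next I would apply the elementary Gaussian tail bound: for $Z\sim N(0,1)$ and any $\beta>0$, $\Pr(|Z|>\sqrt{\beta})\leq e^{-\beta/2}$. Applied conditionally, this gives
\[
\Pr\!\left(|f(x)-\mu_{t-1}(x)|>\sqrt{\beta_t}\,\sigma_{t-1}(x)\ \middle|\ \mathcal{F}_{t-1}\right)\leq e^{-\beta_t/2},
\]
and since $\beta_t$ is a deterministic constant, taking expectations yields the same unconditional bound. Substituting $\beta_t=2\log(|D|\pi^2 t^2/6\delta)$ gives $e^{-\beta_t/2}=6\delta/(|D|\pi^2 t^2)$, so the probability that $f(x)$ falls outside the $t$-th confidence band at $x$ is at most $6\delta/(|D|\pi^2 t^2)$.

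Then I would take a union bound over all $x\in D$ and all $t\geq 1$: the probability that $f(x)\notin[\mu_{t-1}(x)-\sqrt{\beta_t}\sigma_{t-1}(x),\,\mu_{t-1}(x)+\sqrt{\beta_t}\sigma_{t-1}(x)]$ for some $x$ and some $t$ is at most $\sum_{t\geq1}|D|\cdot 6\delta/(|D|\pi^2 t^2)=(6\delta/\pi^2)\sum_{t\geq1}t^{-2}=\delta$, using $\sum_{t\geq1}t^{-2}=\pi^2/6$. On the complementary event, $f(x)$ lies simultaneously in every band $i=1,\dots,t$, hence in their intersection $C_t(x)$ as defined in Eq.~(\ref{def:C_t}); this is precisely the asserted event, so it has probability at least $1-\delta$.

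The main obstacle is the adaptivity step: one must justify that the conditional law of $f(x)$ given the history is still exactly $N(\mu_{t-1}(x),\sigma_{t-1}^2(x))$ despite the query points being selected by a data-dependent policy. The clean resolution is to note that conditional Gaussianity with the posterior parameters holds for any deterministic next point, and then, conditioning on $\mathcal{F}_{t-1}$ and using that $\mathrm{ASP}(U_t,C_t,\dots)$ is $\mathcal{F}_{t-1}$-measurable, conclude it persists; everything after that is the routine Gaussian-tail estimate and the $\sum t^{-2}$ bookkeeping. Since the statement is quoted verbatim from \citet{LSE}, it is equally acceptable to simply invoke their Corollary~1.
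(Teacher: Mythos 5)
The paper gives no proof of this corollary---it imports it verbatim as Corollary~1 of the cited LSE reference---and your argument is exactly the standard Gaussian-tail-plus-union-bound proof underlying that result (Lemma~5.1 of the GP-UCB analysis, specialized to a finite $D$ with the $6\delta/(|D|\pi^2 t^2)$ allocation and $\sum_{t\geq 1}t^{-2}=\pi^2/6$). All steps check out, including the correct handling of adaptively chosen query points via $\mathcal{F}_{t-1}$-measurability and the observation that membership in every band $i=1,\dots,t$ implies membership in the intersection $C_t(x)$, so your proof is correct and coincides with the argument the paper relies on by citation.
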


\begin{theorem}\label{th:GPCB}
	For any $h\in \mathbb{R}, w,\delta\in(0,1), \epsilon>0$, 
	if algorithm GPCB[ASP] terminates, it outputs ``positive'' with probability at least $1-\delta$ in the case with $|H_{h+\epsilon}|/|D|\geq w$,
	and ``negative'' with probability at least $1-\delta$ in the case with $|L_{h-\epsilon}|/|D|> 1-w$.
\end{theorem}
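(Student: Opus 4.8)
The plan is to reduce everything to the ``good event'' supplied by Corollary~\ref{APcorrect-assumptionen} and then show that, on that event, GPCB[ASP] can never commit to the wrong label, no matter what the policy ASP does. Let $E$ be the event that $f(x)\in C_t(x)$ for all $x\in D$ and all $t\geq 1$; by Corollary~\ref{APcorrect-assumptionen}, $\Pr[E]\geq 1-\delta$. It therefore suffices to prove the deterministic statement: on $E$, every terminating run of GPCB[ASP] returns ``positive'' when $|H_{h+\varepsilon}|/|D|\geq w$, and returns ``negative'' when $|L_{h-\varepsilon}|/|D|>1-w$.

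The key structural fact is a monotone containment of the estimated sets. A point $x$ enters $\hat H$ only at a step $t$ at which $\min(C_t(x))\geq h-\varepsilon$; on $E$ we have $f(x)\in C_t(x)$, hence $f(x)\geq \min(C_t(x))\geq h-\varepsilon$. Since classified points are removed from $U_t$ and never reprocessed, once $x$ is in $\hat H$ it stays, so on $E$ we have $\hat H\subseteq\{x\in D:f(x)\geq h-\varepsilon\}=D\setminus L_{h-\varepsilon}$ at every step. Symmetrically, $x\in\hat L$ forces $\max(C_t(x))<h+\varepsilon$, so on $E$ we get $f(x)<h+\varepsilon$ and hence $\hat L\subseteq\{x\in D:f(x)<h+\varepsilon\}=D\setminus H_{h+\varepsilon}$ at every step. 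Consequently, on $E$ and at every step, $|\hat H|\leq |D|-|L_{h-\varepsilon}|$ and $|\hat L|\leq |D|-|H_{h+\varepsilon}|$.

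Now split on the hypothesis. If $|H_{h+\varepsilon}|/|D|\geq w$, then on $E$ we have $|\hat L|/|D|\leq 1-|H_{h+\varepsilon}|/|D|\leq 1-w$ throughout, so the stopping test $|\hat L|/|D|>1-w$ never fires and the only \textbf{return} that can ever execute is ``positive''. If instead $|L_{h-\varepsilon}|/|D|>1-w$, then on $E$ we have $|\hat H|/|D|\leq 1-|L_{h-\varepsilon}|/|D|<w$ throughout, so the test $|\hat H|/|D|\geq w$ never fires and the only label that can be returned is ``negative''. To conclude that a terminating run actually does return one of these, note that $\hat H$, $\hat L$, $U_t$ partition $D$ after each execution of the inner \textbf{for} loop (by induction: $U_0=D$, $\hat H=\hat L=\emptyset$, and each iteration moves each point of $U_{t-1}$ into exactly one of the three sets while never disturbing points already in $\hat H\cup\hat L$); thus $U_t=\emptyset$ implies $|\hat H|/|D|+|\hat L|/|D|=1$, which forces at least one of the two stopping tests to hold and a label to be returned. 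Hence the \textbf{while} loop cannot exit without output, and on $E$ a terminating run outputs the correct label. Since $\Pr[E]\geq 1-\delta$, the theorem follows.

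The argument is almost entirely bookkeeping; I expect the only delicate points to be (i) making sure the containments $\hat H\subseteq D\setminus L_{h-\varepsilon}$ and $\hat L\subseteq D\setminus H_{h+\varepsilon}$ are preserved across all iterations — which rests on the fact that a point, once classified into $\hat H$ or $\hat L$, is never returned to $U_t$ and hence never re-examined — and (ii) ruling out the degenerate possibility that the \textbf{while} loop terminates by $U_t$ emptying without any \textbf{return} being triggered, which is handled by the partition identity above. Everything is independent of ASP, since ASP only influences \emph{which} point is queried, not how points are classified or how stopping is decided.
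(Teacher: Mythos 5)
Your proof is correct and follows essentially the same route as the paper's: you invoke Corollary~\ref{APcorrect-assumptionen} to get the good event, observe that on that event no point of $H_{h+\varepsilon}$ can ever enter $\hat L$ (resp.\ no point of $L_{h-\varepsilon}$ can enter $\hat H$) so the wrong stopping test never fires, and use the partition of $D$ into $\hat H$, $\hat L$, $U_t$ to rule out termination without output. Your write-up is somewhat more explicit about the set containments and the induction maintaining the partition, but the argument is the same.
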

\begin{proof}
  If GPCB[ASP] terminates at time $t$, $|\hat{H}|/|D|\geq w$ or $|\hat{L}|/|D|> 1-w$ must hold at time $t$ because $(|\hat{H}|+|\hat{L}|)/|D|<1$ holds otherwise,
  which implies $U_t\neq \emptyset$ that contradicts the fact GPCB[ASP] terminates at time $t$.
  Thus, whenever GPCB[ASP] terminates, it outputs ``positive'' or ``negative''. Let $t$ be the stopping time.
  
	For $C_t(x)$ defined by Eq.~(\ref{def:C_t}), by Corollary~\ref{APcorrect-assumptionen},
	\begin{align}
	f(x)\in C_t(x) \text{ for any } x\in D \text{ and } t\geq 1 \label{CIcorrect-assumption}
	\end{align}
	holds with probability at least $1-\delta$.
	If (\ref{CIcorrect-assumption}) holds, GPCB[ASP] never outputs ``negative'' in the case with $|H_{h+\epsilon}|/|D|\geq w$ because,
	for all $x\in H_{h+\epsilon}$ and $t\geq 1$, $\max(C_t(x))\geq f(x)\geq h+\varepsilon$ holds, and thus $x\not\in \hat{L}$.
	Thus, GPCB[ASP] always outputs ``positive'' in this case if it terminates. We can similarly prove the theorem statement in the case with $|L_{h-\epsilon}|/|D|> 1-w$. 
\end{proof}

\subsection{Arm Selection Policies for Algorithm GPCB}

We consider three arm selection policy functions, FCB, LSE, and FTSV.
In each of them, we define value $a_t(x)$ for each point $x\in U_t$ and $t\geq 1$, which indicates how good the point $x$ is as the next point to ask its noisy function value of $f(x)$. Using different $a_t(x)$, each of them selects $x\in U_t$ with the maximum $a_t(x)$ as the next query point $x_t$, that is,
\begin{equation}
 \text{ASP}(U_t,C_t,\mu_{t-1},k_{t-1},h)=\argmax_{x\in U_t}a_t(x). \label{ASP:argmax}
\end{equation}
The definitions of $a_t(x)$ for each arm selection policy function ASP are as follows:
\begin{description}
\item[FCB](Farthest Confidence Bound)
  \[
a_t(x)=\max\{\max(C_t(x))-h,h-\min(C_t(x))\}
\]
\item[LSE](Level Set Estimation \citep{LSE})
  \[
a_t(x)=\min\{\max(C_t(x))-h,h-\min(C_t(x))\}
\]
\item[FTSV](Farthest Thompson Sampling Value)
  \[
  a_t(x)=|g_t(x)-h| \text{ for } g_t\sim \mathrm{GP}(\mu_{t-1},k_{t-1})
  \]
\end{description}

Algorithm GPCB[ASP] should select a point $x_t$ that promotes the growth of the estimated high-value point set $\hat{H}$ if the ``positive'' possibility is high and promotes the growth of the estimated low-value point set $\hat{L}$ otherwise.
Incorporating this idea to the arm selection policies, we also propose their rate estimation versions FCB$_{\text{RE}}$, LSE$_{\text{RE}}$,  and FTSV$_{\text{RE}}$
as follows:
\begin{align}
\text{ASP}(U_t,C_t,\mu_{t-1},k_{t-1},h)
= \begin{cases} \argmax_{x\in \tilde{H}_t}a_t(x) & \text{ if } |\hat{H}\cup \tilde{H}_t|/|D|\geq w\\
  \argmax_{x\in U_t\setminus\tilde{H}_t}a_t(x) & \text{ otherwise, }  
  \end{cases} \label{ASP:rateestimation}
\end{align}
where $\tilde{H}_t$ is defined as
\[
\tilde{H}_t=\begin{cases}
\{x\in U_t\mid \max(C_t(x))-h\geq h-\min(C_t(x))\} & \text{ (FCB$_{\text{RE}}$, LSE$_{\text{RE}}$)}\\
\{x\in U_t\mid g_t(x)\geq h\} & \text{ (FTSV$_{\text{RE}}$).}
\end{cases}
\]

\subsection{Sample Complexity Upper Bounds}

 This section first shows a sample complexity upper bound of algorithm GPCB[ASP] for any arm selection policy ASP that always selects a point $x\in U_t$. Then we also show a high-probability sample complexity upper bound of algorithm GPCB[FCB] that is better than it for  problem instances with some moderate complexity.

As the sample complexity upper bound of Algorithm LSE for the level set estimation in \citep{LSE},
those sample complexity upper bounds can be derived by lower and upper bounding the information gain $I(\mathbf{y}_t;\mathbf{f}_t)$ from a list of noisy function values $\mathbf{y}_t=(y_1,\dots,y_t)^\mathsf{T}$
for a list of target function values $\mathbf{f}_t=(f(x_1),\dots,f(x_t))^\mathsf{T}$ of $\mathbf{x}_t=(x_1,\dots,x_t)^\mathsf{T}$.
Information gain $I(\mathbf{y}_t;\mathbf{f}_t)$ is defined as
\[
I(\mathbf{y}_t ; \mathbf{f}_t) = H(\mathbf{y}_t) - H(\mathbf{y}_t \mid \mathbf{f}_t).
\]
Define $\gamma_t$ as
\[
\gamma_t =  \max_{\mathbf{x_t}\in D} I(\mathbf{y}_t; \mathbf{f}_t).
\]
Then, $I(\mathbf{y}_t; \mathbf{f}_t)$ is upper bounded by $\gamma_t$, and its increasing order is known to be
$\gamma_t=O(d\log t)$ for linear kernel $k(x,x')=x^\mathsf{T}x'$, $\gamma_t=O((\log t)^{d+1})$ for the squared exponential kernel $e^{-\frac{||x-x'||^2}{2\ell}}$ with length scale parameter $\ell$, where $x,x'\in D\subset \mathbb{R}^d$ \citep{gpucb}.

As a complexity measure of problem instances, we use $\Delta_w$ which is defined as 
\[
\Delta_w = | f(x^{(\lceil w|D|\rceil )}) -h |,
\]
where $x^{(i)}$ is the point that has the $i$-th largest function value in $D$. The complexity of the problem increases as $\Delta_w$ becomes smaller.
We evaluate sample complexity upper bound in the form of $T(\cdot)$, 
where function $T(\Delta)$ is defined as
\[
T(\Delta)=\min\left\{t\in \mathbb{N}  \left| \ \frac{t}{\beta_t\gamma_t}> \frac{C_1}{\Delta^2}\right.\right\}.
\]
Here $C_1=8/\log(1+\sigma^{-2})$. Note that this function $T(\Delta)$ is a nonincreasing function, and thus $T(\Delta)$ for the larger $\Delta$ is better as sample complexity. We know $T(\Delta)<\frac{C}{\Delta^{2+o(1)}}$ for some constant $C$ that depends on $|D|$, $\delta$, $d$ and $\sigma$. (See \ref{sec:Tdelta}.) Furthermore, we can also derive  
$T(\Delta)<\left(\frac{C'd}{\Delta^2}\right)^{1+o(1)}$ for the linear kernel
and $T(\Delta)<\left(\frac{C''(d+2)^{(d+2)}}{\Delta^2}\right)^{1+o(1)}$ for the squared exponential kernel for some constants  $C'$ and $C''$ that depend on $|D|$, $\delta$, and $\sigma^2$.

The sample complexity of Algorithm GPCB[ASP] for any arm selection policy function ASP that always selects $x$ from the uncertain point set $U_t$ at time $t$ is upper bounded by $T(2\varepsilon)$ by the following theorem.

\begin{theorem}\label{coro:wcen}
	For any $h\in \mathbb{R},w,\delta\in(0,1)$, $\epsilon>0$, and for any arm selection policy function ASP that selects $x\in U_t$ at any time $t$,   
	Algorithm GPCB[ASP] always terminates after receiving at most $T(2\epsilon)$ function values.
\end{theorem}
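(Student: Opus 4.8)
My plan is to follow the deterministic sample-complexity argument used for Algorithm LSE in \citep{LSE}: every point that is still queried has a ``wide'' confidence interval, this turns into a per-round lower bound on the posterior variance, and summing these lower bounds against an information-gain upper bound contradicts the definition of $T(2\varepsilon)$. The starting observation is that a point cannot stay uncertain unless its interval is wide. Concretely, by Lines~\ref{H_t}--\ref{U_t} of Algorithm~\ref{cbca}, $x\in U_t$ forces $\min(C_t(x))<h-\varepsilon$ and $\max(C_t(x))\ge h+\varepsilon$, hence $\max(C_t(x))-\min(C_t(x))>2\varepsilon$; since by Eq.~(\ref{def:C_t}) $C_t(x)\subseteq[\mu_{t-1}(x)-\sqrt{\beta_t}\sigma_{t-1}(x),\ \mu_{t-1}(x)+\sqrt{\beta_t}\sigma_{t-1}(x)]$, an interval of width $2\sqrt{\beta_t}\sigma_{t-1}(x)$, we obtain $\sqrt{\beta_t}\sigma_{t-1}(x)>\varepsilon$, i.e. $\beta_t\sigma_{t-1}^2(x)>\varepsilon^2$. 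Because $\mathrm{ASP}$ returns a point of $U_t$ by hypothesis, and (as in the proof of Theorem~\ref{th:GPCB}) the algorithm cannot reach Line~\ref{asp} with $U_t=\emptyset$, every queried point $x_t$ satisfies $\beta_t\sigma_{t-1}^2(x_t)>\varepsilon^2$.

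Next I would aggregate over time and pit this against an information-gain upper bound. Suppose toward a contradiction that the algorithm issues queries $x_1,\dots,x_T$ with $T=T(2\varepsilon)$. Using $\beta_t\le\beta_T$ and the previous step, $\sum_{t=1}^T\sigma_{t-1}^2(x_t)>T\varepsilon^2/\beta_T$. In the other direction, the standard identity $I(\mathbf{y}_T;\mathbf{f}_T)=\tfrac12\sum_{t=1}^T\log\!\big(1+\sigma^{-2}\sigma_{t-1}^2(x_t)\big)$ \citep{gpucb}, the concavity bound $\log(1+\sigma^{-2}s)\ge s\log(1+\sigma^{-2})$ for $s\in[0,1]$, and $\sigma_{t-1}^2(x_t)\le k(x_t,x_t)\le 1$ yield $\sum_{t=1}^T\sigma_{t-1}^2(x_t)\le\frac{2}{\log(1+\sigma^{-2})}I(\mathbf{y}_T;\mathbf{f}_T)\le\frac{2\gamma_T}{\log(1+\sigma^{-2})}$. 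Chaining the two estimates gives $\frac{T}{\beta_T\gamma_T}<\frac{2}{\varepsilon^2\log(1+\sigma^{-2})}=\frac{C_1}{(2\varepsilon)^2}$, contradicting the defining property of $T(2\varepsilon)$, namely $\frac{T(2\varepsilon)}{\beta_{T(2\varepsilon)}\gamma_{T(2\varepsilon)}}>\frac{C_1}{(2\varepsilon)^2}$. Hence the algorithm issues at most $T(2\varepsilon)$ queries; since each iteration of the while loop either returns or issues exactly one query, it terminates within $T(2\varepsilon)$ observations.

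The only non-routine ingredient is the second step's conversion of posterior variances into the maximal information gain $\gamma_T$: I would invoke the GP-UCB identity for $I(\mathbf{y}_T;\mathbf{f}_T)$ and check the elementary inequality $\log(1+\sigma^{-2}s)\ge s\log(1+\sigma^{-2})$ on $[0,1]$, together with monotonicity of the posterior variance to justify $\sigma_{t-1}^2(x_t)\le 1$. Everything else is bookkeeping: the inequality $\beta_t\sigma_{t-1}^2(x_t)>\varepsilon^2$ must be kept strict so that the final comparison genuinely contradicts the strict inequality defining $T(\cdot)$, and one must record — exactly as in Theorem~\ref{th:GPCB} — that $U_t\neq\emptyset$ whenever neither stopping condition fires, so that $\mathrm{ASP}$ is well defined on every round that leads to a query. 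Note also that the whole bound depends only on the sequence of query points and not on the observed values, which is why the statement holds ``always'' rather than with high probability.
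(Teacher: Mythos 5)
Your proposal is correct and follows essentially the same route as the paper's proof: the widened-interval argument is the paper's Lemma~\ref{lem:hatU_t}, and your aggregation against the information-gain bound via $\log(1+\sigma^{-2}s)\ge s\log(1+\sigma^{-2})$ is exactly the content of Lemma~\ref{lem:asp_epsilon} (you merely pull $\beta_T$ out of the sum up front where the paper keeps $\beta_i$ inside, which is an equivalent bookkeeping choice). The final contradiction with the defining inequality of $T(2\varepsilon)$ and the observation that the bound is deterministic both match the paper.
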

\begin{proof}
See \ref{sec:coro:wcen}.
\end{proof}

The proof of Theorem~\ref{coro:wcen} is a simplified refinement of that of Theorem~1 in \citep{LSE},
and takes the stopping condition $U_t=\emptyset$ only into account. Thus the sample complexity bound $T(2\epsilon)$
holds even for the modified GPCB  in which checking of additional two conditions (Line~\ref{alg:stcheckcbb}-\ref{alg:stcheckcbe}) is removed. This fact means that, though \citet{LSE} proved the sample complexity bound $T(2\epsilon)$
for the arm selection policy LSE only, the same bound $T(2\epsilon)$ can be proved for any arm selection policy that selects $x\in U_t$ at any time $t$ in the problem of level set estimation.

When $\Delta_w>\varepsilon$, we can prove better sample complexity bound $T(\Delta_w+\varepsilon)$ depending on our problem setting
for Algorithm GPCB[FCB] as the following theorem.

\begin{theorem}\label{th:fcben}
	For any $h\in \mathbb{R}, w,\delta\in (0,1)$ and $\epsilon>0$, 
	Algorithm GPCB[FCB] terminates after receiving at most $T(\Delta_w+\epsilon)$ noisy function values with probability at least $1-\delta$.
\end{theorem}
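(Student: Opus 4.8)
The plan is to condition on the event $\mathcal{E}=\{\,f(x)\in C_t(x)\text{ for all }x\in D\text{ and }t\ge 1\,\}$, which by Corollary~\ref{APcorrect-assumptionen} has probability at least $1-\delta$, and to show that on $\mathcal{E}$ the algorithm cannot receive as many as $T(\Delta_w+\varepsilon)$ noisy function values; that gives the theorem. The case $\Delta_w\le\varepsilon$ needs no new work: since $T$ is nonincreasing, $T(\Delta_w+\varepsilon)\ge T(2\varepsilon)$, and because FCB always queries a point of $U_t$, Theorem~\ref{coro:wcen} already bounds the number of queries of GPCB[FCB] by $T(2\varepsilon)\le T(\Delta_w+\varepsilon)$, unconditionally. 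So assume $\Delta_w>\varepsilon$; the heart of the proof is a per-step guarantee that, before termination, FCB always queries a point whose posterior confidence interval is still wide.

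The key claim I would establish is: on $\mathcal{E}$, at any step $t$ at which GPCB[FCB] selects a point $x_t$, the width $\max(C_t(x_t))-\min(C_t(x_t))$ is at least $\Delta_w+\varepsilon$. To prove it, put $m=\lceil w|D|\rceil$ so that $\Delta_w=|f(x^{(m)})-h|$, and split on the sign of $f(x^{(m)})-h$; the two cases are mirror images under exchanging $\hat{H}\leftrightarrow\hat{L}$, ``positive''$\leftrightarrow$``negative'', and the index blocks $\{x^{(1)},\dots,x^{(m)}\}\leftrightarrow\{x^{(m)},\dots,x^{(|D|)}\}$, so describe $f(x^{(m)})=h+\Delta_w$. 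Because GPCB[FCB] has not returned ``positive'', $|\hat{H}|<w|D|$, hence $|\hat{H}|<m$, so some $x^{(j)}$ with $j\le m$ is not in $\hat{H}$ (and, as $\hat{H}$ only grows, was never in $\hat{H}$). On $\mathcal{E}$, using that the intervals $C_t(x)$ are nested in $t$, $\max(C_t(x^{(j)}))\ge f(x^{(j)})\ge f(x^{(m)})=h+\Delta_w>h+\varepsilon$, so $x^{(j)}$ was never put into $\hat{L}$ either; hence $x^{(j)}\in U_t$. Therefore $a_t(x^{(j)})\ge\max(C_t(x^{(j)}))-h\ge f(x^{(j)})-h\ge\Delta_w$, and since FCB picks $x_t=\argmax_{x\in U_t}a_t(x)$ we get $a_t(x_t)\ge\Delta_w$. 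Now $x_t\in U_t$ forces $\min(C_t(x_t))<h-\varepsilon$ and $\max(C_t(x_t))\ge h+\varepsilon$, while $a_t(x_t)\ge\Delta_w$ forces $\max(C_t(x_t))\ge h+\Delta_w$ or $\min(C_t(x_t))\le h-\Delta_w$; combining, the width is at least $(h+\Delta_w)-(h-\varepsilon)=\Delta_w+\varepsilon$ in the first case and at least $(h+\varepsilon)-(h-\Delta_w)=\Delta_w+\varepsilon$ in the second.

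Granting the claim, the rest is the information-gain counting argument of \citep{LSE}. Since $C_t(x_t)\subseteq[\mu_{t-1}(x_t)-\sqrt{\beta_t}\sigma_{t-1}(x_t),\,\mu_{t-1}(x_t)+\sqrt{\beta_t}\sigma_{t-1}(x_t)]$, the claim gives $2\sqrt{\beta_t}\sigma_{t-1}(x_t)\ge\Delta_w+\varepsilon$, that is, $\beta_t\sigma_{t-1}^2(x_t)\ge(\Delta_w+\varepsilon)^2/4$, at every pre-termination step $t$. If GPCB[FCB] received $T(\Delta_w+\varepsilon)$ function values, then summing this over $t=1,\dots,T(\Delta_w+\varepsilon)$, and using $\beta_t\le\beta_{T(\Delta_w+\varepsilon)}$ together with the standard bound $\sum_t\sigma_{t-1}^2(x_t)\le\frac{2}{\log(1+\sigma^{-2})}\gamma_{T(\Delta_w+\varepsilon)}$ (a consequence of $k(x,x)\le1$ and the definition of $\gamma_t$, exactly as in \citep{LSE}), one obtains $\frac{T(\Delta_w+\varepsilon)}{\beta_{T(\Delta_w+\varepsilon)}\gamma_{T(\Delta_w+\varepsilon)}}\le\frac{C_1}{(\Delta_w+\varepsilon)^2}$, contradicting the definition of $T(\Delta_w+\varepsilon)$ as the least $t$ with $\frac{t}{\beta_t\gamma_t}>\frac{C_1}{(\Delta_w+\varepsilon)^2}$. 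Hence on $\mathcal{E}$ the algorithm stops after strictly fewer than $T(\Delta_w+\varepsilon)$ queries, which yields the bound with probability at least $1-\delta$.

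The main obstacle I anticipate is the width claim, and within it the delicate point is certifying that the witness $x^{(j)}$ really is an element of $U_t$ at the selection step and was not prematurely sorted into $\hat{H}$ or $\hat{L}$ at an earlier step — this is exactly where the strict inequality $\Delta_w>\varepsilon$ and the nestedness of the confidence intervals are used. A minor but necessary bookkeeping step is the mirrored case $f(x^{(m)})<h$, where the witnesses lie in $\{x^{(m)},\dots,x^{(|D|)}\}$, the relevant stopping test is $|\hat{L}|/|D|>1-w$, and one checks the elementary inequality $|D|-\lceil w|D|\rceil+1>(1-w)|D|$ so that those $|D|-\lceil w|D|\rceil+1$ points, once all in $\hat{L}$, would indeed trigger ``negative''. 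The information-gain step is routine and is taken essentially verbatim from \citep{LSE}.
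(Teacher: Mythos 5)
Your proposal is correct and follows essentially the same route as the paper: condition on the event $f(x)\in C_t(x)$ of Corollary~\ref{APcorrect-assumptionen}, exhibit a witness $x^{(j)}$ with $|f(x^{(j)})-h|\geq\Delta_w$ that must still lie in $U_t$ before the stopping test fires, conclude $a_t(x_t)\geq\Delta_w$ and hence $2\sqrt{\beta_t}\sigma_{t-1}(x_t)\geq\Delta_w+\varepsilon$, and close with the information-gain summation against the definition of $T(\Delta_w+\varepsilon)$. The only (harmless) deviations are that you establish the lower bound on $a_t(x_t)$ at every step directly, where the paper establishes it only at the terminal step $T_1$ and propagates it backwards via the monotonicity of $a_t(x_t)$ (Lemma~\ref{APnonincreasing}) and Cauchy--Schwarz, and that you are more explicit than the paper about the degenerate case $\Delta_w\leq\varepsilon$, which you correctly dispatch via Theorem~\ref{coro:wcen}.
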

\begin{proof}
See \ref{sec:SCUFCB}.
\end{proof}

\section{Experiments}
In this section, we compare the experimental performance of Algorithm GPCB[ASP] for various ASPs and their rate estimation versions using synthetic functions and real-world data. In all the result tables, the best numbers are bolded, and the numbers that are not significantly different from them are italicized.

\subsection{Comparison Algorithms}

In addition to ASP$=$FCB, LSE, FTSV, and their rate estimation versions, we compare their performances of GPCB[ASP] with those of the following existing ASPs,
which also select $x_t$ by Eq. (\ref{ASP:argmax}) using $a_t(x)$ defined as:
\begin{description}
\item[UCB] (Upper Confidence Bound)
  \[
  a_t(x)=\mu_{t-1}(x)+\sqrt{\beta_t}\sigma_{t-1}
  \]
\item[LCB] (Lower Confidence Bound)
  \[
  a_t(x)=\mu_{t-1}(x)-\sqrt{\beta_t}\sigma_{t-1}
  \]
\item[TS] (Thompson Sampling)
\[
  a_t(x)=g_t(x) \text{ for } g_t\sim \mathrm{GP}(\mu_{t-1},k_{t-1})
  \]
\item[VAR] (VARiance)
  \[
  a_t(x)=\sigma_{t-1}^2(x)=k_{t-1}(x,x)
  \]
\item[STR] (STRaddle heuristic \citep{Bryan2006})
\[
a_t(x)=1.96\sigma_{t-1}(x)-|\mu_{t-1}(x)-h|
\]
\end{description}
The rate estimation versions VAR$_{\text{RE}}$ and STR$_{\text{RE}}$ of VAR and STR, respectively,
select $x_t$ by Eq.~(\ref{ASP:rateestimation}) using $\tilde{H}_t$ defined as
\[
\tilde{H}_t=\{x\in U_t\mid \mu_{t-1}(x)\geq h\}.
\]
For every algorithm, we set $\sqrt{\beta_t} = 3$ as in the experiments by \citet{LSE}.  The dimensions $d$ of all the datasets are two. 
We use Gaussian kernel $k(x,x') = \sigma^2_{kernel} \exp (\| x-x'\|^2/2l^2)$ with parameters $\sigma^2_{kernel}$ and $l$ in all the experiments.
In all the experiments except that using GP-prior-geneated functions, we tune kernel parameters by maximum likelihood estimation for 100 samples generated from a uniform distribution.

\subsection{Synthetic Functions}

We conduct numerical experiments using two types of synthetic functions.
One is a type of function generated according to a GP prior and the other is a type of function used as a benchmark.
In all the experiments using synthetic functions, domain $D$ is set to $[a,a+b]^2$ for some $a,b\in \mathbb{R}$,
and $D$ is defined as $D=\{(a+ib/30,a+jb/30)\mid i,j=0,1,\dots,30\}$. 
Note that $|D|=961$.

\subsubsection{GP-Prior-Generated Functions}

\begin{table}[t]
\centering
  \caption{\#(function values) asked by GPCB[ASP] for ASPs \emph{without} rate estimations.
The numbers are averaged over 100 functions generated from $\mathrm{GP}(0,k(x,x'))$ and their 95\% confidence intervals are also shown.
The underlined bolded numbers  are the best among those for all the ASPs in Tables~\ref{ex:GPprior} and \ref{ex:GPprior2}.}\label{ex:GPprior}
  \begin{tabular}{lr@{$\pm$}rr@{$\pm$}rr@{$\pm$}rr@{$\pm$}r}
  \hline
    \hline
      ASP                       & \multicolumn{2}{c}{$w=0.2$} & \multicolumn{2}{c}{$w=0.4$} & \multicolumn{2}{c}{$w=0.6$} & \multicolumn{2}{c}{$w=0.8$}  \\ \hline
      FCB                       & 71.94 & 2.17 & 180.5 & 7.9 & 184.0 & 7.9 & 72.04 & 1.87   \\
      LSE                        & 72.09 & 1.76 & 199.0 & 16.6 & 202.4 & 15.9 & 71.70 & 1.50   \\
      FTSV      & 72.95 & 2.16 & 185.2 & 9.6 & 188.1 & 7.9 & 72.53 & 1.99\\
      STR                       & 73.00 & 0.55 & 261.5 & 2.9 & 386.0 & 26.9 & 84.00 & 1.94   \\
      VAR                       & 64.00 & 0.83 & \textit{174.0} & 10.3 & 216.0 & 4.4 & 79.00 & 1.66   \\
      TS               & \textbf{57.58} & 1.75 & \textbf{159.3} & 7.4 & 337.3 & 18.0 & 112.20 & 2.67  \\
      UCB                       & \textit{58.50} & 0.42 & 180.0 & 3.1 & 451.0 & 23.0 & 112.00 & 1.11  \\
      LCB                       & 101.44 & 2.41 & 301.5 & 21.6 & \underline{\textbf{155.1}} & 6.4 & \textbf{55.83} & 1.62  \\ \hline
      \hline
    \end{tabular}
\end{table}

\begin{figure}[t]
\centering
\begin{tabular}{@{}c@{}c@{}}
growth curve of $|\hat{H}|/|D|$ & growth curve of $|\hat{L}|/|D|$\\
\includegraphics[width=0.5\textwidth]{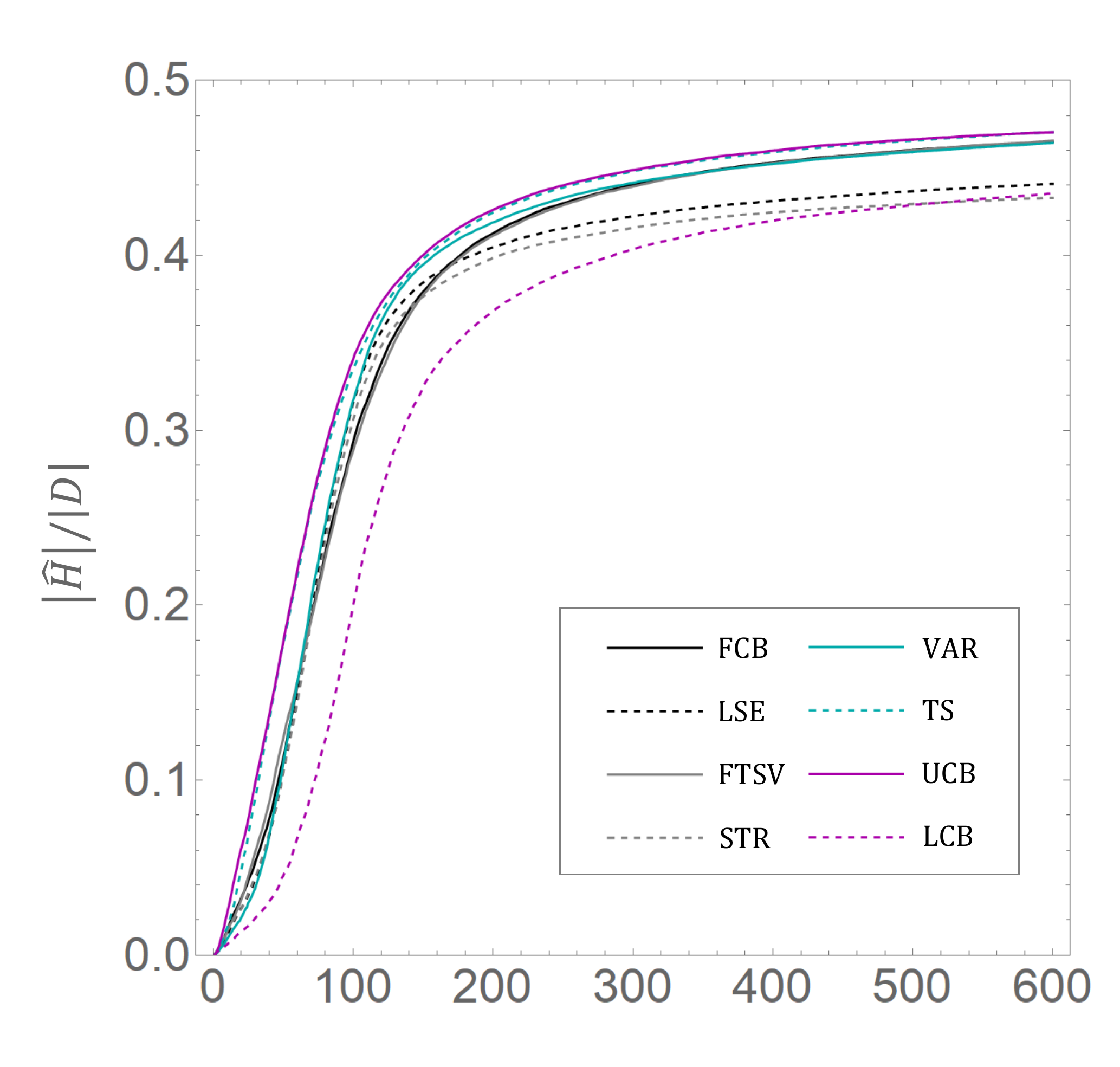} &
\includegraphics[width=0.5\textwidth]{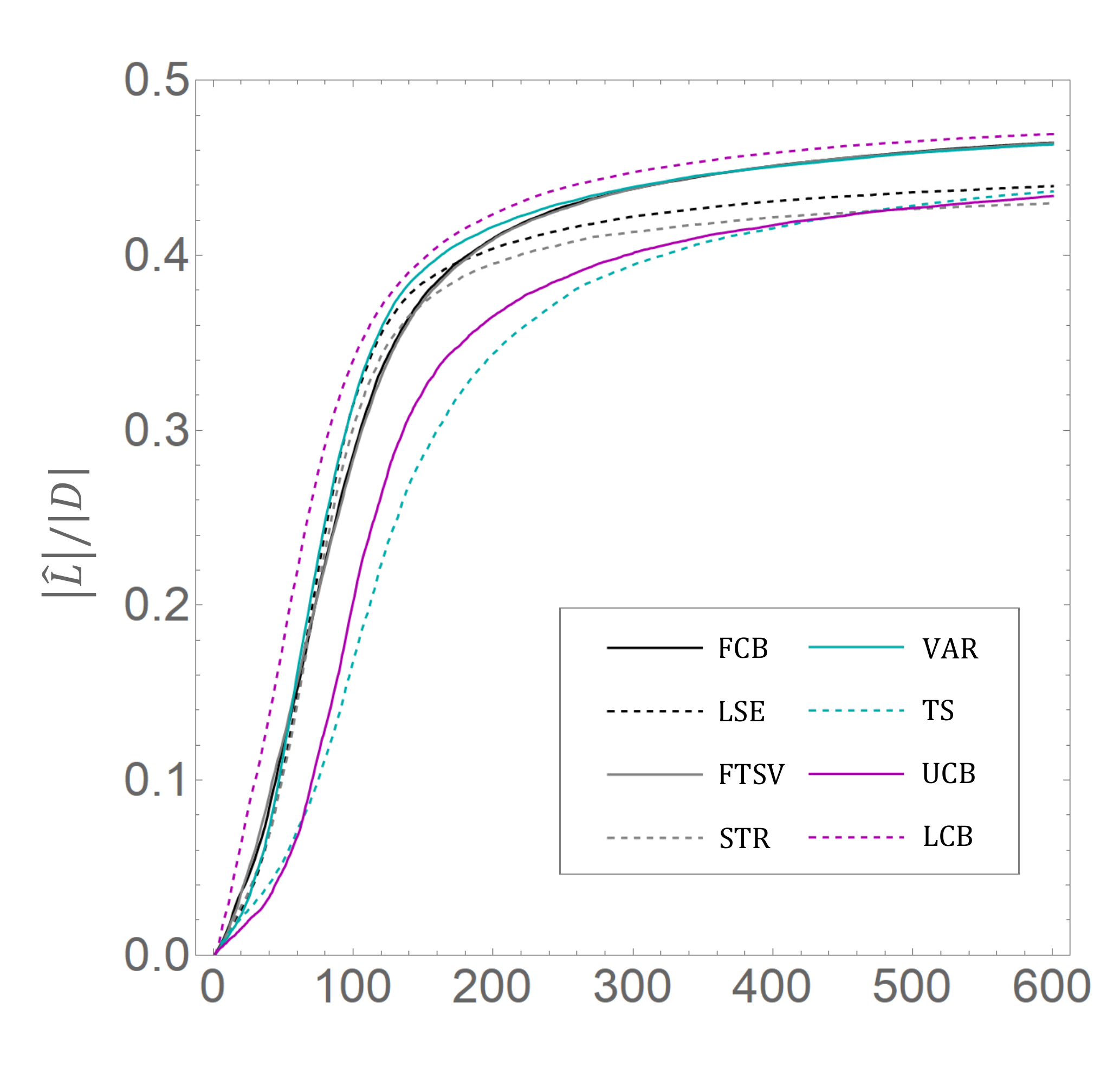} 
\end{tabular}
\caption{Growth curves of $|\hat{H}|/|D|$ (left) and  $|\hat{L}|/|D|$ (right) averaged over $100$ simulations for Algorithm GPCB[ASP] with ASPs \emph{without} rate estimations. The horizontal axis shows the number of observed function values.}\label{fig:GCwoRE}
\end{figure}

We generate target functions $f$ by sampling from the GP prior $\mathrm{GP}(0,k(x,x'))$ that is assumed in our problem settings.
Domain $X$ is defined as $X=[-1,1]^2$.
The parameters $\sigma_{kernel}$ and $l$ of the Gaussian kernel are set as $\sigma_{kernel} = 1$, $l=0.2$.
Values $\varepsilon = 10^{-8}, \sigma=0.1$ are used, and $h$ is set so that $|H_h|/|D|=0.5$ holds in this experiment. 

The average numbers of function values asked by GPCB[ASP] and their 95\% confidence intervals for ASPs without rate estimations are shown in Table~\ref{ex:GPprior}.
Note that the result is averaged over 100 functions generated according to $\mathrm{GP}(0,k(x,x'))$.
TS and UCB  perform well when the true answer is positive but perform poorly in the other case as a result of always selecting the estimated highest point.
LCB  performs oppositely as a result of always selecting the estimated lowest point.
These characteristics of TS, UCB, and LCB can be confirmed by the growth curves of $|\hat{H}|/|D|$ and $|\hat{L}|/|D|$ over the number of asked function values (Figure~\ref{fig:GCwoRE}).
FCB and VAR look the best performers, FTSV performs the next, and LSE follows.
STR does not perform well when $w$ is close to the high-value point rate, and LSE  has a similar tendency.
The reason why VAR performs well without using threshold $h$ in its calculation, is guessed as that VAR's selection resembles FCB's selection
in the GPCB framework, in which all the ASP selects a query point from $U_t$ excluding $\hat{H}$ and $\hat{L}$.

\begin{table}[t]
\centering
  \caption{\#(function values) asked by GPCB[ASP] for ASPs \emph{with} rate estimations. The meanings of the shown numbers are the same as in Table~\ref{ex:GPprior}.}\label{ex:GPprior2}
  \begin{tabular}{lr@{$\pm$}rr@{$\pm$}rr@{$\pm$}rr@{$\pm$}r}
  \hline
    \hline
      ASP                       & \multicolumn{2}{c}{$w=0.2$} & \multicolumn{2}{c}{$w=0.4$} & \multicolumn{2}{c}{$w=0.6$} & \multicolumn{2}{c}{$w=0.8$}  \\ \hline
      FCB$_{\text{RE}}$  & \underline{\textbf{54.63}} & 1.64 & \textit{158.6} & 8.8 & \textit{159.6} & 7.7 & \underline{\textbf{54.32}} & 1.45  \\
      LSE$_{\text{RE}}$  & 62.16 & 2.05 & 216.3 & 27.3 & 222.6 & 42.5 & 63.33 & 1.99  \\
      FTSV$_{\text{RE}}$ & \textit{56.53} & 1.71 & \textit{160.8} & 7.8 & \textit{166.3} & 7.9 & \textit{56.53} & 1.60  \\
      STR$_{\text{RE}}$  & 68.47 & 7.83 & 337.8 & 72.6 & 325.9 & 49.6 & 71.10 & 13.80   \\
      VAR$_{\text{RE}}$  & \textit{56.96} & 1.47 & \underline{\textbf{149.1}} & 8.1 & \textbf{158.4} & 10.9 & \textit{56.88} & 1.89   \\
      \hline\hline
    \end{tabular}
\end{table}
\begin{figure}[t]
\centering
\begin{tabular}{@{}c@{}c@{}}
growth curve over $|\hat{H}|/|D|$ & growth curve over $|\hat{L}|/|D|$\\
\includegraphics[width=0.5\textwidth]{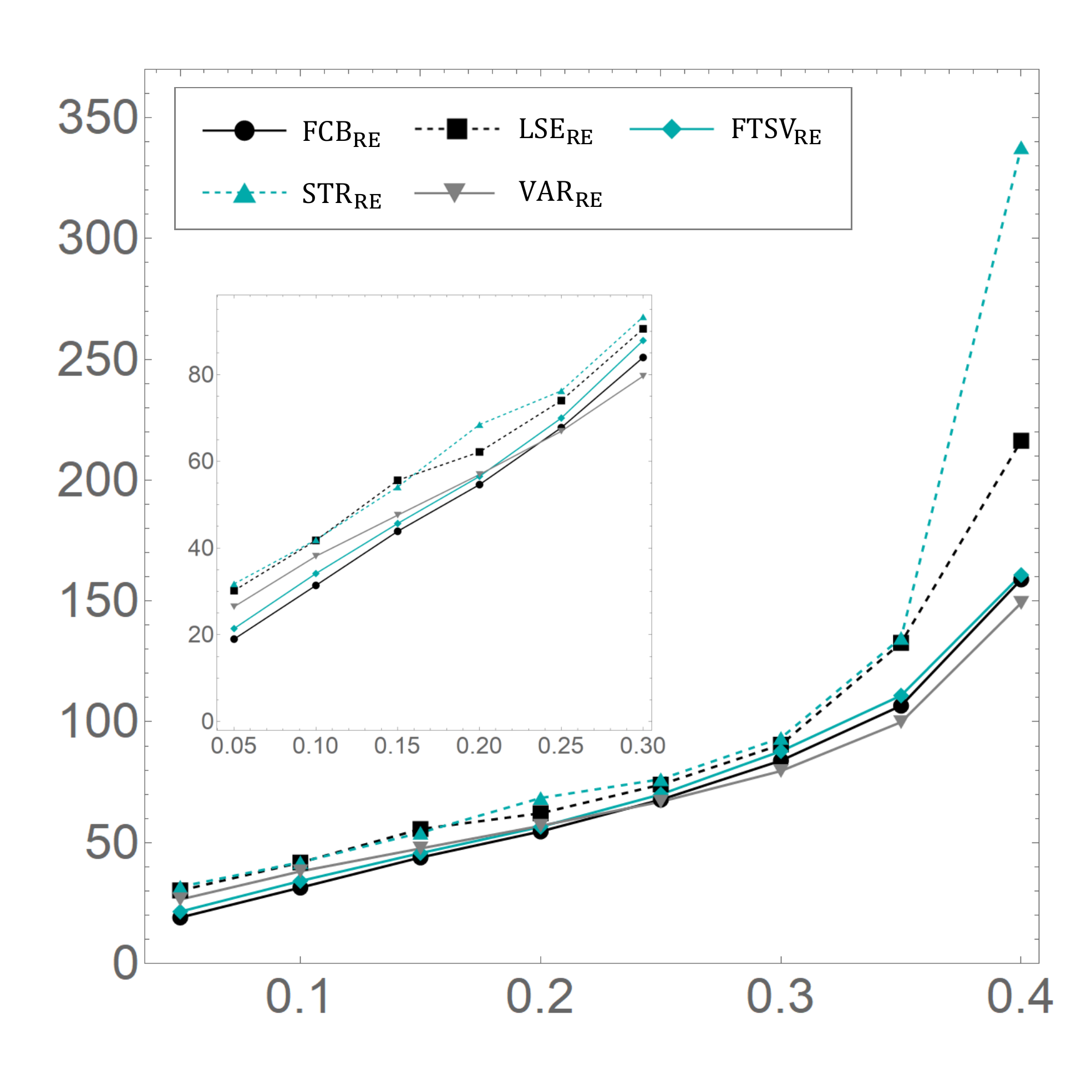} &
\includegraphics[width=0.5\textwidth]{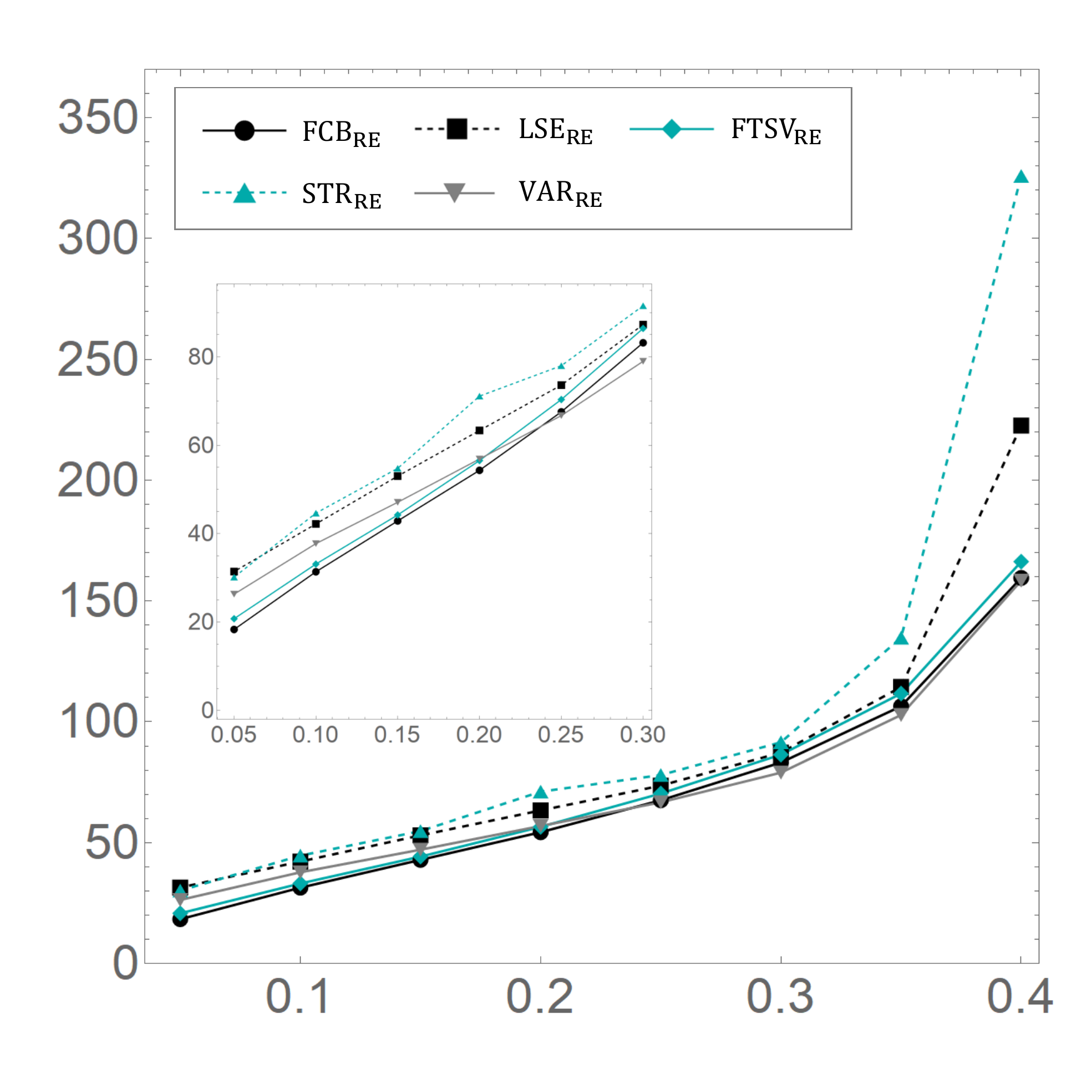} \\
\end{tabular}
\caption{Growth curves of \#(function values) over $|\hat{H}|/|D|$ (left) and $|\hat{L}|/|D|$ (right) for Algorithm GPCB[ASP] with rate estimation versions of ASPs . Each value is averaged over $100$ simulations. The inside plot shows a zoomed portion of a plot  from $0.05$ to $0.30$.}\label{fig:GCwRE}
\end{figure}

\begin{figure}[t]
\centering
\begin{tabular}{@{}c@{}c@{}}
growth curve of $|\hat{H}|/|D|$ & growth curve of $|\hat{L}|/|D|$\\
\includegraphics[width=0.5\textwidth]{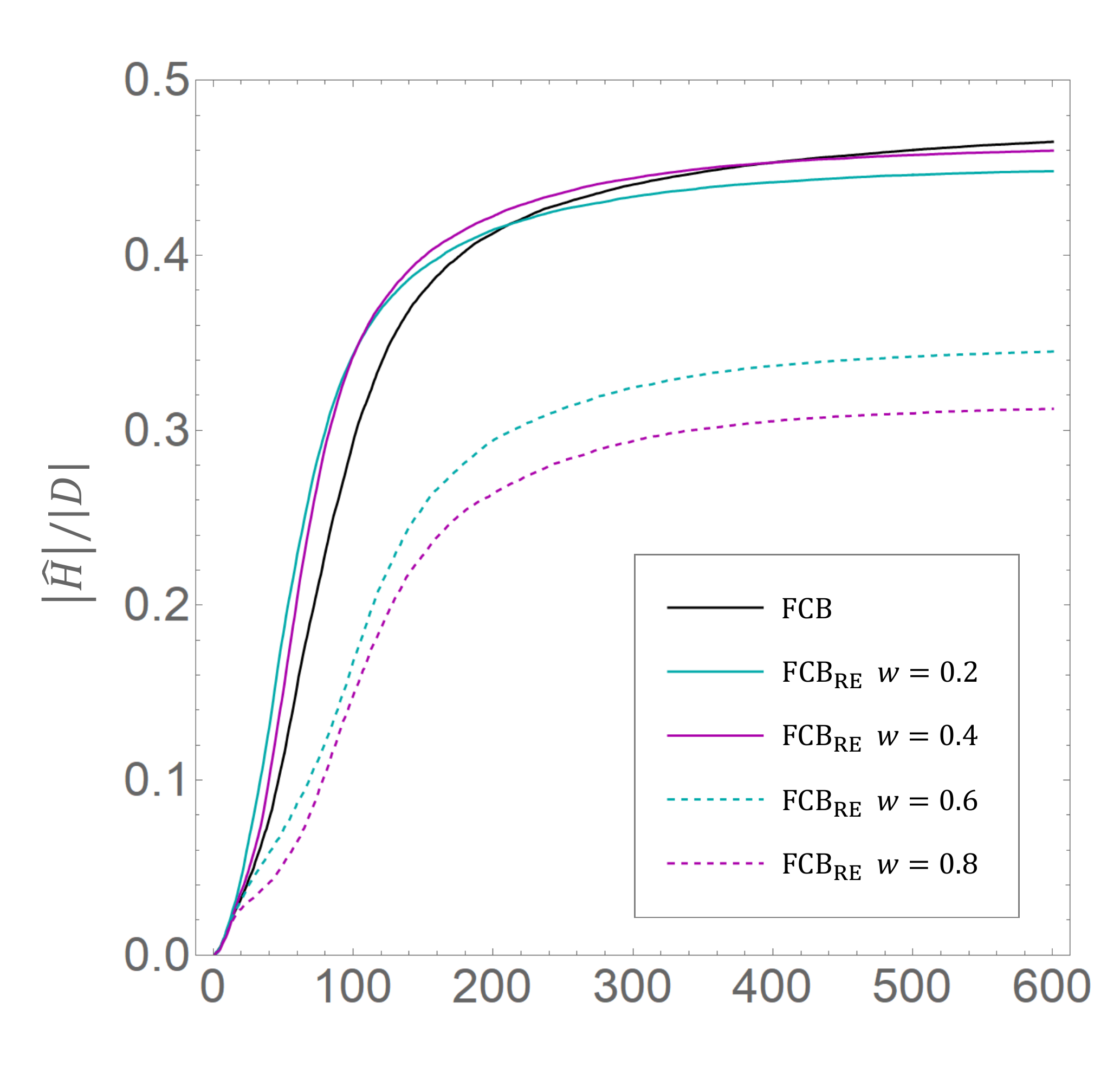} &
\includegraphics[width=0.5\textwidth]{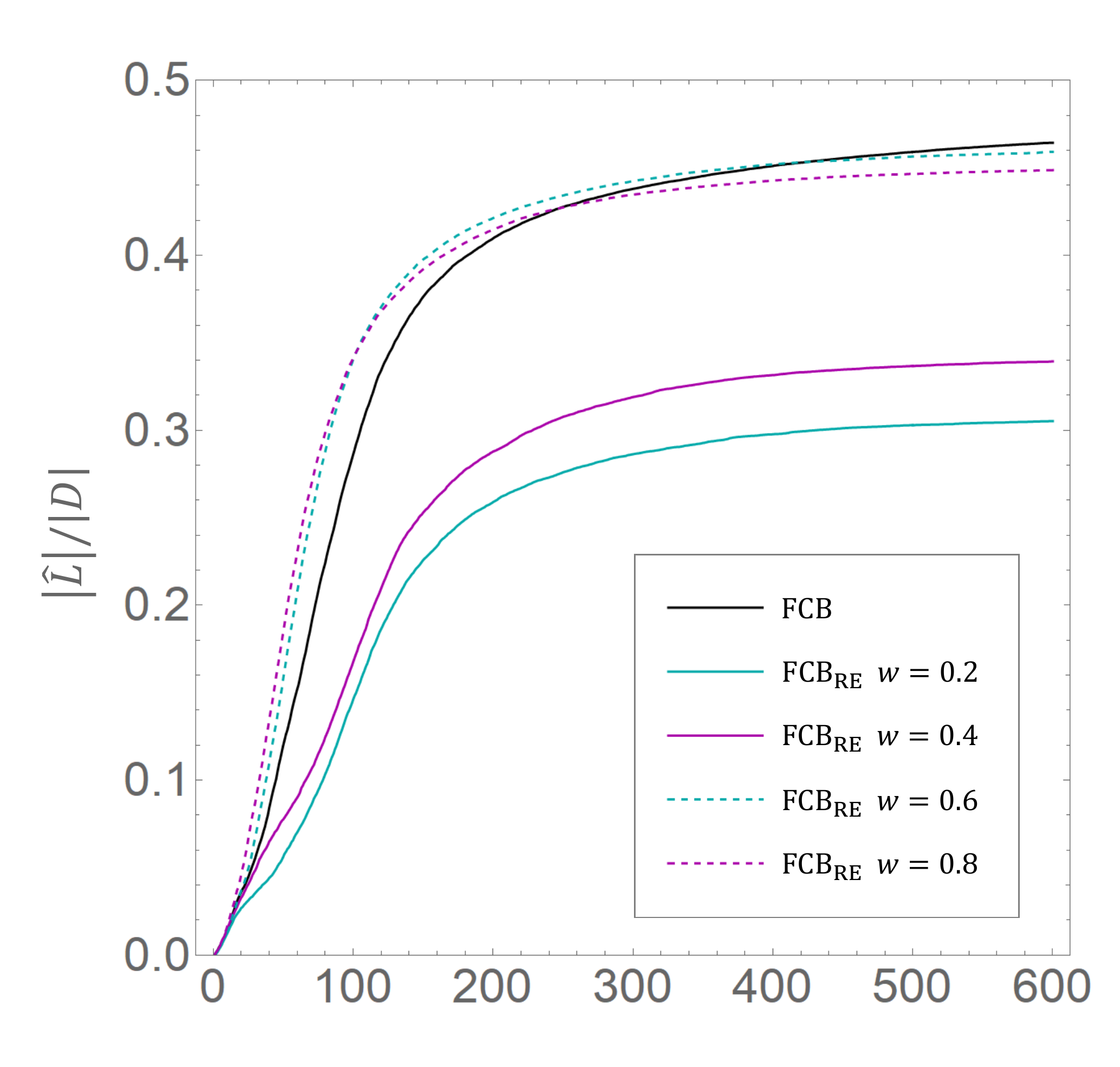} \\
\end{tabular}
\caption{Growth curves of $|\hat{H}|/|D|$ (left) and  $|\hat{L}|/|D|$ (right) averaged over $100$ simulations for Algorithm GPCB[ASP] with ASP=FCB and FCB$_{\text{RE}}$. 
The horizontal axis shows the number of function values.}\label{GWforFCB}
\end{figure}

The results for ASPs with rate estimations are shown in Table~\ref{ex:GPprior2}.
Rate estimations reduce the number of function values asked by GPCB[ASP] by about 20\% on average for FCB, VAR, and FTSV
while  not always reducing that for LSE and STR.
LSE and STR are algorithms for level set estimation whose task is to determine belonging for every point in $D$,
and thus a point $x$ with $f(x)$ close to $h$ is preferred to be asked in some cases to achieve the task efficiently.
Growth curves of the number of asked function values over $|\hat{H}|/|D|$ and $|\hat{L}|/|D|$ in Figure~\ref{fig:GCwRE}
indicate similar behaviors of FCB and FTSV. From the curves, we know that FCB and FTSV perform better than VAR for the thresholds $w$ that are far from the real rate 
$|H_h|/|D|=0.5$ while VAR  performs a little bit better for thresholds $w$ that are close to the real rate.
Growth curves of $|\hat{H}|/|D|$ and  $|\hat{L}|/|D|$ for ASP=FCB and FCB$_{\text{RE}}$ with various $w$ are shown in Figure~\ref{GWforFCB}.
We can see that the setting value of $w$ changes the algorithm's priority on which points in $\hat{H}$ or $\hat{L}$ should be found.

\subsubsection{Benchmark Functions}

In the experiments using two benchmark functions, we compare empirical sample complexities of GPCB[ASP] with ASPs that are good performers for GP-prior-generated functions: FCB, LSE, FTSV, VAR, and their rate estimation versions. Among them, we omit the result for FTSV because its performance was observed to be always similar to (or a little bit worse than) FCB's performance.

\subsubsection*{Gauss Function}

The first benchmark function is the  two-dimensional Gaussian function $f(x,y) = 5 e^{-3(x^2+y^2)}$ for $(x,y)\in X=[-1,1]^2$.
We set $\varepsilon = 10^{-8}, \sigma=0.1$, and $h=0.5$. In this setting, $|H_h|/|D|=0.567118$.
The tuned Gaussian kernel parameters $\sigma_{kernel}$ and $l$ are $1.29$ and $0.53$, respectively.

\begin{table}[tb]
\centering
\caption{\#values of the  two-dimensional Gaussian function asked by GPCB[ASP] averaged over ten simulations and its 95\% confidence interval for various ASPs.}\label{ex:benchgaussf}
  \begin{tabular}{lr@{$\pm$}rr@{$\pm$}rr@{$\pm$}rr@{$\pm$}r}
    \hline\hline
      ASP                       & \multicolumn{2}{c}{$w=0.2$} & \multicolumn{2}{c}{$w=0.4$} & \multicolumn{2}{c}{$w=0.6$} & \multicolumn{2}{c}{$w=0.8$}  \\ \hline
      FCB                       & 10.9 & 0.92 & 24.4 & 1.82 & \textbf{261.0} & 45.6 & 33.3 & 1.07  \\
      LSE                        & 12.3 & 2.05 & 26.0 & 1.58 & 659.9 & 235.4 & \textit{32.1} & 2.53  \\
      VAR                       & 11.5 & 0.51 & 21.5 & 2.22 & \textit{333.3} & 68.4 & \textit{31.1} & 1.60  \\
      FCB$_{\text{RE}}$                 &\textit{7.3} & 0.90 & \textit{17.8} & 0.89 & \textit{292.9} & 25.6 & \textit{30.9} & 2.20  \\
      LSE$_{\text{RE}}$                  & \textit{8.5} & 1.36 & 23.7 & 1.65 & 555.7 & 96.8 & 33.8 & 3.01  \\
      FTSV$_{\text{RE}}$ & \textbf{6.5} & 1.27 & \textit{18.9} & 1.89 & \textit{300.3} & 51.6 & 36.5 & 4.08  \\
      VAR$_{\text{RE}}$                  & \textit{8.4} & 1.76 & \textbf{17.7} & 1.01 & \textit{294.6} & 37.7 & \textbf{28.6} & 1.27  \\
      \hline\hline
    \end{tabular}
\end{table}

The results are shown in Table~\ref{ex:benchgaussf}.
The tendency is the same as in the previous experiment.
FCB and VAR perform better than LSE, and their rate estimation versions perform better.
FCB$_{\text{RE}}$ and VAR$_{\text{RE}}$ are the best performers and FTSV$_{\text{RE}}$ follows.

\subsubsection*{Sinusoidal Function}

The second benchmark function is the sinusoidal function $f(x,y)$ defined as
$f(x, y) = \sin(10x) + \cos(4y) - \cos(3xy)$ for $(x,y) \in X=[0, 2]^2$, which was used in~\citep{Bryan2006}.
We use values $\varepsilon = 10^{-8}, \sigma=0.1$, and $h=0.2$ in this experiment.
In this setting, $|H_h|/|D|=0.447451$.
The tuned Gaussian kernel parameters $\sigma_{kernel}$ and $l$ are $1.78$ and $0.24$, respectively.

\begin{table}[tb]
  \centering
  \caption{\#values of the sinusoidal function asked by GPCB[ASP] averaged over ten simulations and its 95\% confidence interval for various ASPs.}\label{ex:benchsinusof}
  \begin{tabular}{lr@{$\pm$}rr@{$\pm$}rr@{$\pm$}rr@{$\pm$}r}
    \hline\hline
      ASP                       & \multicolumn{2}{c}{$w=0.2$} & \multicolumn{2}{c}{$w=0.4$} & \multicolumn{2}{c}{$w=0.6$} & \multicolumn{2}{c}{$w=0.8$}  \\ \hline
      FCB                       &  83.08 & 0.56 & 265.3 & 3.8 & 102.65 & 0.81 & 56.70 & 0.43 \\
      LSE                        & 72.55 & 0.53 & 662.9 & 34.6 & 99.67 & 1.08 & 60.43 & 0.64\\
      VAR                       & 76.44 & 0.50 & 273.9 & 5.4 & 96.15 & 0.64 & 56.08 & 0.40 \\
      FCB$_{\text{RE}}$                  & 60.52 & 0.79 & \textit{220.6} & 5.3 & \textit{89.27} & 0.82 & \textbf{48.01} & 0.51 \\
      LSE$_{\text{RE}}$                   & 61.33 & 2.15 & 517.0 & 31.1 & 95.53 & 0.81 & 58.13 & 0.53  \\
      FTSV$_{\text{RE}}$  & 65.95 & 0.67 & \textbf{218.4} & 3.1 & 93.64 & 0.66 & \textit{48.61} & 0.62 \\
      VAR$_{\text{RE}}$                   & \textbf{58.46} & 0.63 & 230.8 & 6.0 & \textbf{89.03} & 0.99 & 50.87 & 0.49 \\
      \hline\hline
    \end{tabular}
\end{table}

The results are shown in Table~\ref{ex:benchsinusof}.
The tendency is also almost the same.
Rate estimation reduces the number of samples needed for classification, but its effect on LSE is smaller than the effect on FCB and VAR.
The best performers are FCB$_{\text{RE}}$, FTSV$_{\text{RE}}$, and VAR$_{\text{RE}}$.

\subsection{Real-World Dataset}

\begin{table}[t]
  \centering
  \caption{\#(function values) asked by GPCB[ASP] and TS-CB averaged over ten simulations and their 95\% confidence interval for the cancer index image dataset.}\label{tab:realdata}
  {\small
  \begin{tabular}{lr@{$\pm$}rr@{$\pm$}rr@{$\pm$}rr@{$\pm$}rr@{$\pm$}r}
                \hline\hline
               image                       & \multicolumn{2}{c}{FTC133$_1$}     & \multicolumn{2}{c}{FTC133$_2$}     & \multicolumn{2}{c}{FTC133$_3$}  &  \multicolumn{2}{c}{FTC133$_4$}  & \multicolumn{2}{c}{FTC133$_5$}  \\
{\small $|H_h|/|D|$} & \multicolumn{2}{c}{0.449531}& \multicolumn{2}{c}{0.565354 }& \multicolumn{2}{c}{0.408979}& \multicolumn{2}{c}{0.449792}& \multicolumn{2}{c}{0.522302}\\
\hline
              FCB                        & 747.5 & 7.9& 586.2 & 8.5& 792.2 & 6.2& 744.6 & 9.1& 598.2 & 8.2\\
              LSE                         & 518.8 & 3.1& 512.4 & 3.0& 595.2 & 10.2& 511.9 & 2.8 & 547.4 & 4.2\\
              FTSV                      & 720.3 & 3.2& 543.1 & 3.7& 800.0 & 3.3& 743.8 & 3.4& 587.3 & 4.3\\
              STR                        & 515.7 & 2.4& 513.3 & 2.1& 564.6 & 5.7& 509.6 & 1.9& 531.1 & 3.7\\
              VAR                        & 675.4 & 2.7& 548.6 & 2.3& 748.0 & 3.3& 667.9 & 2.6& 600.7 & 3.5\\              
              FCB$_{\text{RE}}$     & 640.9 & 17.3& 490.3 & 11.7& 776.2 & 7.3& 554.4 & 28.2& 526.8 & 14.9\\
              LSE$_{\text{RE}}$     & \textit{422.3} & 4.8& 428.0 & 4.2& 558.6 & 15.1& \textit{384.6} & 2.1& \textit{439.5} & 4.3\\
              FTSV$_{\text{RE}}$    & 424.8 & 2.4& \textbf{392.8} & 1.9& \textbf{488.1} & 3.3& 389.0 & 1.7& \textit{439.1} & 3.2\\
              STR$_{\text{RE}}$    & \textbf{414.9} & 3.7& 438.7 & 5.4& 529.7 & 10.1& \textbf{383.7} & 2.5& \textit{439.2} & 4.2\\
              VAR$_{\text{RE}}$    & 493.0 & 10.9& 415.1 & 2.6& 687.3 & 7.6& 395.3 & 5.6& \textbf{438.5} & 5.9\\ \hline
              TS-CB                    & 1804.4 & 8.7& 1550.4 & 3.6& 2284.3 & 13.0& 1548.6 & 3.5& 1999.2 & 10.5\\ \hline
		\end{tabular}
  \begin{tabular}{lr@{$\pm$}rr@{$\pm$}rr@{$\pm$}rr@{$\pm$}rr}
                \cline{1-9}
               image                       & \multicolumn{2}{c}{Nthyori31$_1$}     & \multicolumn{2}{c}{Nthyori31$_2$}     & \multicolumn{2}{c}{Nthyori31$_3$}  &  \multicolumn{2}{c}{Nthyori31$_4$}  &  \hspace*{2.065cm} \\
{\small $|H_h|/|D|$} & \multicolumn{2}{c}{0.197938}& \multicolumn{2}{c}{0.236750}& \multicolumn{2}{c}{0.027188}& \multicolumn{2}{c}{0.119469} &\\
\cline{1-9}
              FCB                        & \textbf{819.8} & 2.9& 842.3 & 3.4& \textit{690.7} & 0.9& \textit{766.2} & 2.1 &\\
              LSE                         & 873.8 & 2.2& 893.5 & 2.2& 696.6 & 0.8& 792.9 & 2.0   \\
              FTSV                      & \textit{825.0} & 2.9& 840.4 & 3.1& \textit{691.0} & 0.7& \textbf{762.6} & 1.8 &\\
              STR                        & 872.4 & 2.3& 903.1 & 1.8& 696.3 & 0.9& 793.0 & 2.0 &\\
              VAR                        & 844.1 & 2.5& 885.1 & 2.2& 692.7 & 0.9& 773.0 & 2.0 &\\              
              FCB$_{\text{RE}}$     & \textit{820.0} & 2.7& \textit{837.0} & 3.5& \textit{691.2} & 0.9& \textit{763.3} & 1.9 &\\
              LSE$_{\text{RE}}$     & 868.8 & 2.4& 881.9 & 3.3& 697.6 & 1.0& 792.3 & 2.1 &\\
              FTSV$_{\text{RE}}$    & \textit{822.4} & 2.7& \textbf{832.2} & 2.6& \textbf{690.6} & 0.8& \textit{764.4} & 2.0 &\\
              STR$_{\text{RE}}$    & 871.0 & 2.1& 889.0 & 2.7& 697.0 & 1.1& 792.7 & 2.3 & \\
              VAR$_{\text{RE}}$    & 843.5 & 2.1& 876.9 & 1.8& \textit{692.2} & 0.8& 772.9 &2.0 & \\ \cline{1-9}
              TS-CB                    & 2984.6 & 14.5& 2318.0 & 11.4& 1990.1 & 3.4 & 2782.4 & 9.7 &\\ \cline{1-9}
  \end{tabular}
  }
\end{table}

To check the applicability of Algorithm GPCB[ASP] to fast cancer diagnosis using Raman spectroscopy,
we conduct a simulation using cancer index images that are made from 5 Raman images of cultured human follicular thyroid carcinoma cells (FTC-133)
and 4 Raman images of cultured normal human primary thyroid follicular epithelial cells (Nthy-ori 3-1).
Note that follicular thyroid carcinoma cells are known to be difficult to diagnose from their  fluorescence images to only capture morphological features,
and Raman spectra can detect the difference of constituent molecules between cancer and non-cancer cells.
Raman  spectrum of one point  is composed of intensities of 840  wavenumbers,
and thus  two-dimensional simultaneous measurement is not easy in addition to slow measurement per one point ($1\sim 10$s).
The cancer index of each point is calculated from the  Raman intensities of the point by a decision forest trained using 8 Raman images, excluding  a test image that contains the  illumination point to be tested. See ~\ref{sec:rwdataset} for details on how to construct our cancer index images.
Each cancer index image is composed of $240 \times 400 (= 96,000)$ pixels and each point takes a value in $\{0.00, 0.01, 0.02,\dots,1.00\}$. We shift all the cancer index values by $- 0.5$  to fit them to the GP prior with mean $0$.
We partitioned each whole image into $960$ $(10\times 10)$-grids and let $D$ be the set of $960$ grid centers.
 A noisy function value for a grid center is realized by random sampling from $100$ points of the grid.
We used Gaussian kernel and adjusted the parameters one image by one image in advance by maximum likelihood estimation using 8 other images in a leave-one-out manner. 
Setting thresholds $h=0.0$ (for shifted values) and $w=0.3$, we run GPCB[ASP] for various ASPs and compare their empirical sample complexities.
In order to check the effect of using the correlation assumption, that is, using a GP prior,
we also run the existing multi-armed classification bandit algorithm called ThompsonSampling-CB \citep{Tabata2021}, TS-CB for short, which calculates
the posterior distribution $N(\mu_t(x),\sigma^2_t(x))$ of $f(x)$ for each point $x$ as follows using a normal distribution prior $N(\eta,\tau^2)$:
\begin{align*}
\mu_t(x) &= \frac{n_t(x) \tau^2 \bar{y}_t(x) + \sigma^2 \eta}{n_t(x) \tau^2 + \sigma^2} \text{ and}\\
\sigma_t^2(x) &= \frac{\tau^2 \sigma^2}{n_t(x) \tau^2 + \sigma^2},
\end{align*}
where $n_t(x)$ and $\bar{y}_t(x)$ are the number of selections and the average observed value, respectively, at point $x$ up to $t$.
The next query point $x_t$ of TS-CB is selected in the same way as FTSV$_{\text{RE}}$ except that
$g_t(x)$ is sampled from $N(\mu_{t-1}(x),\sigma_{t-1}^2(x))$ at each point $x$ instead of sampled from $\mathrm{GP}(\mu_{t-1},k_{t-1})$ only once as a function $g_t$ on $D$.
We set $\eta$ and $\tau$ one image by one image, calculating the mean and variance over all the $(10\times 10)$-grids in a leave-one-out manner.

The results are shown in Table~\ref{tab:realdata}.
Note that all 11 algorithms outputted correct answers for all nine images.
The tendency is a little different from the results for synthetic functions.
FCB and FTSV perform better than LSE and STR for non-cancer images but worse for cancer images, and VAR's performances are between them for both types of images. 
The rate estimation works well even for this real-world dataset, especially, FTSV's cancer image performance improves significantly.
As a result, FTSV$_{\text{RE}}$ is the best performer among all the ASPs.
The effect of using the correlation assumption is trivial; TS-CB needs $2.8\sim 4.7$ times larger number of samples than GPCB[ASP] with the best ASP.

\section{Conclusion and Future Work}

In this study, we formulated the problem of Gaussian process classification bandits. It
is the special classification bandit that considers correlations between rewards of arms corresponding to points in $d$-dimensional real space by assuming the GP prior of a reward function.
It is  also a modified level set estimation that enables earlier stopping by changing the objective to classification.
For this problem, we proposed a framework algorithm GPCB[ASP] that can use  various arm selection policies ASPs
and proved a sample complexity for GPCB[FCB] smaller than that for the LSE algorithm, which is an algorithm for the level set estimation. For the level set estimation, the objective-changed GPCB[ASP] with any ASP that always selects an uncertain arm, can be shown to achieve the same sample complexity upper bound as the LSE algorithm.
For the synthetic functions, newly proposed policies FCB and FTSV performed better than policies LSE and STR,
which are the state-of-the-art policies used for the level set estimation.
GPCB[ASP]s using the rate estimation version of ASPs were demonstrated to improve stopping time  
for both synthetic functions and a real-world dataset.
For the real-world dataset (the cancer index image dataset), GPCB[ASP] outperformed the existing classification bandit algorithm TS-CB that does not consider the correlation between arms' rewards, and 
the rate estimation version of FTSV was the best performer. Theoretical analyses of the rate estimation versions are our future work.

\section*{Acknowledgement}
This work was partially supported by JST CREST Grant Number JPMJCR1662 and JSPS KAKENHI Grant Numbers JP18H05413 and JP19H04161, Japan.

\bibliographystyle{elsarticle-harv} 
\bibliography{patrec2022}

\begin{thebibliography}{19}
\expandafter\ifx\csname natexlab\endcsname\relax\def\natexlab#1{#1}\fi
\providecommand{\url}[1]{\texttt{#1}}
\providecommand{\href}[2]{#2}
\providecommand{\path}[1]{#1}
\providecommand{\DOIprefix}{doi:}
\providecommand{\ArXivprefix}{arXiv:}
\providecommand{\URLprefix}{URL: }
\providecommand{\Pubmedprefix}{pmid:}
\providecommand{\doi}[1]{\href{http://dx.doi.org/#1}{\path{#1}}}
\providecommand{\Pubmed}[1]{\href{pmid:#1}{\path{#1}}}
\providecommand{\bibinfo}[2]{#2}
\ifx\xfnm\relax \def\xfnm[#1]{\unskip,\space#1}\fi
\bibitem[{Audibert et~al.(2010)Audibert, Bubeck and Munos}]{audibert2010}
\bibinfo{author}{Audibert, J.}, \bibinfo{author}{Bubeck, S.},
  \bibinfo{author}{Munos, R.}, \bibinfo{year}{2010}.
\newblock \bibinfo{title}{Best arm identification in multi-armed bandits}, in:
  \bibinfo{booktitle}{{COLT} 2010 - The 23rd Conference on Learning Theory,
  Haifa, Israel, June 27-29, 2010}, pp. \bibinfo{pages}{41--53}.
\bibitem[{Auer et~al.(2002)Auer, Cesa-Bianchi and Fischer}]{auer2002finite}
\bibinfo{author}{Auer, P.}, \bibinfo{author}{Cesa-Bianchi, N.},
  \bibinfo{author}{Fischer, P.}, \bibinfo{year}{2002}.
\newblock \bibinfo{title}{Finite-time analysis of the multiarmed bandit
  problem}.
\newblock \bibinfo{journal}{Machine learning} \bibinfo{volume}{47},
  \bibinfo{pages}{235--256}.
\bibitem[{Bryan et~al.(2006)Bryan, Nichol, Genovese, Schneider, Miller and
  Wasserman}]{Bryan2006}
\bibinfo{author}{Bryan, B.}, \bibinfo{author}{Nichol, R.C.},
  \bibinfo{author}{Genovese, C.R.}, \bibinfo{author}{Schneider, J.},
  \bibinfo{author}{Miller, C.J.}, \bibinfo{author}{Wasserman, L.},
  \bibinfo{year}{2006}.
\newblock \bibinfo{title}{Active learning for identifying function threshold
  boundaries}, in: \bibinfo{booktitle}{Advances in Neural Information
  Processing Systems 18}. \bibinfo{publisher}{MIT Press}, pp.
  \bibinfo{pages}{163--170}.
\bibitem[{Bubeck et~al.(2009)Bubeck, Munos and Stoltz}]{bubeck2009}
\bibinfo{author}{Bubeck, S.}, \bibinfo{author}{Munos, R.},
  \bibinfo{author}{Stoltz, G.}, \bibinfo{year}{2009}.
\newblock \bibinfo{title}{Pure exploration in multi-armed bandits problems},
  in: \bibinfo{booktitle}{Algorithmic Learning Theory},
  \bibinfo{publisher}{Springer Berlin Heidelberg}. pp. \bibinfo{pages}{23--37}.
\bibitem[{Chatzigeorgiou(2013)}]{Lambert2013}
\bibinfo{author}{Chatzigeorgiou, I.}, \bibinfo{year}{2013}.
\newblock \bibinfo{title}{Bounds on the lambert function and their application
  to the outage analysis of user cooperation}.
\newblock \bibinfo{journal}{IEEE Communications Letters} \bibinfo{volume}{17},
  \bibinfo{pages}{1505--1508}.
\bibitem[{Degenne and Koolen(2019)}]{DK2019}
\bibinfo{author}{Degenne, R.}, \bibinfo{author}{Koolen, W.M.},
  \bibinfo{year}{2019}.
\newblock \bibinfo{title}{Pure exploration with multiple correct answers}, in:
  \bibinfo{booktitle}{Advances in Neural Information Processing Systems 32:
  Annual Conference on Neural Information Processing Systems 2019, NeurIPS
  2019, December 8-14, 2019, Vancouver, BC, Canada}, pp.
  \bibinfo{pages}{14564--14573}.
\bibitem[{Even-Dar et~al.(2002)Even-Dar, Mannor and Mansour}]{even-dar2002}
\bibinfo{author}{Even-Dar, E.}, \bibinfo{author}{Mannor, S.},
  \bibinfo{author}{Mansour, Y.}, \bibinfo{year}{2002}.
\newblock \bibinfo{title}{{PAC Bounds for Multi-armed Bandit and Markov
  Decision Processes}}, in: \bibinfo{booktitle}{Computational Learning Theory},
  \bibinfo{publisher}{Springer Berlin Heidelberg}. pp.
  \bibinfo{pages}{255--270}.
\bibitem[{Garivier and Kaufmann(2021)}]{garivier2021}
\bibinfo{author}{Garivier, A.}, \bibinfo{author}{Kaufmann, E.},
  \bibinfo{year}{2021}.
\newblock \bibinfo{title}{{Non-Asymptotic Sequential Tests for Overlapping
  Hypotheses and application to near optimal arm identification in bandit
  models}}.
\newblock \bibinfo{journal}{{Sequential Analysis}} \URLprefix
  \url{https://hal.archives-ouvertes.fr/hal-02123833}.
\bibitem[{Gotovos et~al.(2013)Gotovos, Casati, Hitz and Krause}]{LSE}
\bibinfo{author}{Gotovos, A.}, \bibinfo{author}{Casati, N.},
  \bibinfo{author}{Hitz, G.}, \bibinfo{author}{Krause, A.},
  \bibinfo{year}{2013}.
\newblock \bibinfo{title}{Active learning for level set estimation}, in:
  \bibinfo{booktitle}{Proceedings of the Twenty-Third International Joint
  Conference on Artificial Intelligence (IJCAI)}, p.
  \bibinfo{pages}{1344–1350}.
\bibitem[{Helal et~al.(2019)Helal, Taylor, Cahyadi, Okajima, Tabata, Itoh,
  Tanaka, Fujita, Harada and Komatsuzaki}]{khalifa2019}
\bibinfo{author}{Helal, K.M.}, \bibinfo{author}{Taylor, J.N.},
  \bibinfo{author}{Cahyadi, H.}, \bibinfo{author}{Okajima, A.},
  \bibinfo{author}{Tabata, K.}, \bibinfo{author}{Itoh, Y.},
  \bibinfo{author}{Tanaka, H.}, \bibinfo{author}{Fujita, K.},
  \bibinfo{author}{Harada, Y.}, \bibinfo{author}{Komatsuzaki, T.},
  \bibinfo{year}{2019}.
\newblock \bibinfo{title}{Raman spectroscopic histology using machine learning
  for nonalcoholic fatty liver disease}.
\newblock \bibinfo{journal}{FEBS letters} \bibinfo{volume}{593},
  \bibinfo{pages}{2535—2544}.
\bibitem[{Horiue et~al.(2020)Horiue, Sasaki, Yoshikawa, Toyofuku and
  Shigeto}]{horiue2020raman}
\bibinfo{author}{Horiue, H.}, \bibinfo{author}{Sasaki, M.},
  \bibinfo{author}{Yoshikawa, Y.}, \bibinfo{author}{Toyofuku, M.},
  \bibinfo{author}{Shigeto, S.}, \bibinfo{year}{2020}.
\newblock \bibinfo{title}{Raman spectroscopic signatures of carotenoids and
  polyenes enable label-free visualization of microbial distributions within
  pink biofilms}.
\newblock \bibinfo{journal}{Scientific reports} \bibinfo{volume}{10},
  \bibinfo{pages}{1--10}.
\bibitem[{Kano et~al.(2019)Kano, Honda, Sakamaki, Matsuura, Nakamura and
  Sugiyama}]{kano}
\bibinfo{author}{Kano, H.}, \bibinfo{author}{Honda, J.},
  \bibinfo{author}{Sakamaki, K.}, \bibinfo{author}{Matsuura, K.},
  \bibinfo{author}{Nakamura, A.}, \bibinfo{author}{Sugiyama, M.},
  \bibinfo{year}{2019}.
\newblock \bibinfo{title}{Good arm identification via bandit feedback}.
\newblock \bibinfo{journal}{Mach. Learn.} \bibinfo{volume}{108},
  \bibinfo{pages}{721–745}.
\bibitem[{Kaufmann et~al.(2018)Kaufmann, Koolen and Garivier}]{Kaufmann2018}
\bibinfo{author}{Kaufmann, E.}, \bibinfo{author}{Koolen, W.M.},
  \bibinfo{author}{Garivier, A.}, \bibinfo{year}{2018}.
\newblock \bibinfo{title}{{Sequential Test for the Lowest Mean: From Thompson
  to Murphy Sampling}}, in: \bibinfo{editor}{Bengio, S.},
  \bibinfo{editor}{Wallach, H.}, \bibinfo{editor}{Larochelle, H.},
  \bibinfo{editor}{Grauman, K.}, \bibinfo{editor}{Cesa-Bianchi, N.},
  \bibinfo{editor}{Garnett, R.} (Eds.), \bibinfo{booktitle}{Advances in Neural
  Information Processing Systems}, \bibinfo{publisher}{Curran Associates, Inc.}
\bibitem[{Lieber and Mahadevan-Jansen(2003)}]{lieber2003automated}
\bibinfo{author}{Lieber, C.A.}, \bibinfo{author}{Mahadevan-Jansen, A.},
  \bibinfo{year}{2003}.
\newblock \bibinfo{title}{{Automated method for subtraction of fluorescence
  from biological Raman spectra}}.
\newblock \bibinfo{journal}{Applied spectroscopy} \bibinfo{volume}{57},
  \bibinfo{pages}{1363--1367}.
\bibitem[{Locatelli et~al.(2016)Locatelli, Gutzeit and Carpentier}]{Locatelli}
\bibinfo{author}{Locatelli, A.}, \bibinfo{author}{Gutzeit, M.},
  \bibinfo{author}{Carpentier, A.}, \bibinfo{year}{2016}.
\newblock \bibinfo{title}{An optimal algorithm for the thresholding bandit
  problem}, in: \bibinfo{booktitle}{Proceedings of The 33rd International
  Conference on Machine Learning}, pp. \bibinfo{pages}{1690--1698}.
\bibitem[{Srinivas et~al.(2012)Srinivas, Krause, Kakade and Seeger}]{gpucb}
\bibinfo{author}{Srinivas, N.}, \bibinfo{author}{Krause, A.},
  \bibinfo{author}{Kakade, S.M.}, \bibinfo{author}{Seeger, M.W.},
  \bibinfo{year}{2012}.
\newblock \bibinfo{title}{Information-theoretic regret bounds for gaussian
  process optimization in the bandit setting}.
\newblock \bibinfo{journal}{IEEE Transactions on Information Theory}
  \bibinfo{volume}{58}, \bibinfo{pages}{3250–3265}.
\bibitem[{Tabata et~al.(2020)Tabata, Nakamura, Honda and
  Komatsuzaki}]{Tabata2020}
\bibinfo{author}{Tabata, K.}, \bibinfo{author}{Nakamura, A.},
  \bibinfo{author}{Honda, J.}, \bibinfo{author}{Komatsuzaki, T.},
  \bibinfo{year}{2020}.
\newblock \bibinfo{title}{A bad arm existence checking problem: How to utilize
  asymmetric problem structure?}
\newblock \bibinfo{journal}{Mach. Learn.} \bibinfo{volume}{109},
  \bibinfo{pages}{327--372}.
\bibitem[{Tabata et~al.(2021)Tabata, Nakamura and Komatsuzaki}]{Tabata2021}
\bibinfo{author}{Tabata, K.}, \bibinfo{author}{Nakamura, A.},
  \bibinfo{author}{Komatsuzaki, T.}, \bibinfo{year}{2021}.
\newblock \bibinfo{title}{Classification bandits: Classification using expected
  rewards as imperfect discriminators}.
\newblock \bibinfo{journal}{PAKDD 2021 Workshop on Machine Learning for
  MEasurement INformatics (MLMEIN 2021)} , \bibinfo{pages}{57--69}.
\bibitem[{Zoladek et~al.(2010)Zoladek, Johal, Garcia-Nieto, Pascut, Shakesheff,
  Ghaemmaghami and Notingher}]{zoladek2010label}
\bibinfo{author}{Zoladek, A.B.}, \bibinfo{author}{Johal, R.K.},
  \bibinfo{author}{Garcia-Nieto, S.}, \bibinfo{author}{Pascut, F.},
  \bibinfo{author}{Shakesheff, K.M.}, \bibinfo{author}{Ghaemmaghami, A.M.},
  \bibinfo{author}{Notingher, I.}, \bibinfo{year}{2010}.
\newblock \bibinfo{title}{Label-free molecular imaging of immunological
  synapses between dendritic and t cells by raman micro-spectroscopy}.
\newblock \bibinfo{journal}{Analyst} \bibinfo{volume}{135},
  \bibinfo{pages}{3205--3212}.

\end{thebibliography}

\appendix

\section{Asymptotic Behavior of $T(\Delta)$}\label{sec:Tdelta}

In this section, we analyze the asymptotic behavior of function $T(\Delta)$ as $\Delta\rightarrow 0$ for $\beta_t=2\log(|D|\pi^2t^2/6\delta)$ and $\gamma_t=O(d\log t)$ (linear kernel) or  $\gamma_t=O((\log t)^{d+1})$ (squared exponential kernel).
For fixed $|D|$, $\delta$, $d$ and $\sigma^2$, there exist  positive constants $C_{\beta}$ and $C_{\gamma}$ such that $\beta_t\leq C_{\beta}\log t$ and $\gamma_t\leq C_{\gamma}(\log t)^n$  hold for enough large $t$, where $n=1$ for linear kernel and $n=d+1$ for squared exponential kernel.
Define $T'(\Delta)$ as 
\[
T'(\Delta)=\min\left\{t\in \mathbb{N}  \left| \ \frac{t}{C_{\beta}C_{\gamma}(\log t)^{n+1}}> \frac{C_1}{\Delta^2}\right.\right\},
\]
 which serves as an upper bound of $T(\Delta)$.
Consider time $t$ that satisfies equation
\begin{align}
\frac{t}{C_{\beta}C_{\gamma}(\log t)^{n+1}}= \frac{C_1}{\Delta^2} \label{eq:T'},
\end{align}
then $T'(\Delta)=\lceil t\rceil$ holds,
 resulting in $T(\Delta)=O(t)$ for such $t$.
Define  $\alpha(\Delta)$ as
\[
\alpha(\Delta)=\frac{C_1C_{\beta}C_{\gamma}}{\Delta^2},
\]
and, then Eq.~(\ref{eq:T'}) is rewritten as
\[
\frac{t}{(\log t)^{n+1}}= \alpha(\Delta).
\]
By raising both sides to the $1/(n+1)$-th power, we obtain
\[
\frac{t^{\frac{1}{n+1}}}{\log t}= \{\alpha(\Delta)\}^{\frac{1}{n+1}}.
\]
By substituting  $e^{-s(n+1)}$ ($-\infty < s < \infty$) for $t$, this equation is rewritten as
\[
\frac{e^{-s}}{-s(n+1)}=\{\alpha(\Delta)\}^{\frac{1}{n+1}}.
\]
Thus, by arranging this equality, we obtain
\[
se^s=-\frac{1}{(n+1)\{\alpha(\Delta)\}^{\frac{1}{n+1}}}.
\]
Then, $s$ can be represented as
\[
s=W_{-1}\left(-\frac{1}{(n+1)\{\alpha(\Delta)\}^{\frac{1}{n+1}}}\right)
\]
using one branch $W_{-1}$ of the Lambert W function \citep{Lambert2013}.
For function $W_{-1}$, an inequality
\begin{align*}
W_{-1}(-e^{-u-1})>-1-\sqrt{2u}-u 
\end{align*}
is known to hold for $u>0$.
Therefore, for $u$ that satisfies
\begin{align}
e^{-u-1}=\frac{1}{(n+1)\{\alpha(\Delta)\}^{\frac{1}{n+1}}}, \label{eq:u}
\end{align}
an inequality
\[
t=e^{-s(n+1)}<e^{(1+\sqrt{2u}+u)(n+1)}=(e^{(n+1)(1+u)})^{1+\frac{\sqrt{2u}}{1+u}}
\]
holds. Since
\[
e^{(n+1)(1+u)}=(n+1)^{n+1}\alpha(\Delta)
\]
holds by Eq.~(\ref{eq:u}) and $u\rightarrow \infty$ as $\Delta\rightarrow 0$, we have
\[
T(\Delta)<\frac{(C_1C_{\beta}C_{\gamma}(n+1)^{(n+1)})^{(1+\eta/2)}}{\Delta^{2+\eta}}
\]
for any fixed $\eta>0$, that means $T(\Delta)<\frac{C}{\Delta^{2+o(1)}}$ for some constant $C$ that depends on
fixed $|D|$, $\delta$, $d$, and $\sigma^2$.
Taking also $d$ into account, using some constants $C'$ and $C''$ that depend on $|D|$,$\delta$ and $\sigma^2$,
we obtain $T(\Delta)<\left(\frac{C'd}{\Delta^2}\right)^{1+o(1)}$ for the linear kernel
and $T(\Delta)<\left(\frac{C''(d+2)^{(d+2)}}{\Delta^2}\right)^{1+o(1)}$ for the squared exponential kernel.

\section{Proof of Theorem~\ref{coro:wcen}}\label{sec:coro:wcen}

\begin{lemma}\label{lem:hatU_t}
  In Algorithm GPCB[ASP],
  \[
x\in U_t \Rightarrow \sqrt{\beta_t}\sigma_{t-1}(x)>\varepsilon
  \]
holds for any $ t\geq 1 $ and for any arm selection policy function ASP.
\end{lemma}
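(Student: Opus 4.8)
The plan is to unfold the definition of $U_t$ from the \textbf{for}-loop of Algorithm~\ref{cbca} and convert the two tests that place $x$ into $U_t$ into a lower bound on the width of the interval $C_t(x)$. First I would note that if $x\in U_t$, then by Lines~\ref{H_t}--\ref{U_t} the point $x$ must have failed both membership tests, i.e.\ $\min(C_t(x))<h-\varepsilon$ and $\max(C_t(x))\ge h+\varepsilon$. Hence $C_t(x)$ is a (nonempty) interval whose endpoints satisfy $\max(C_t(x))-\min(C_t(x))\ge (h+\varepsilon)-\min(C_t(x))>(h+\varepsilon)-(h-\varepsilon)=2\varepsilon$.

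Next I would invoke the update rule $C_t(x)=C_{t-1}(x)\cap[\mu_{t-1}(x)-\sqrt{\beta_t}\sigma_{t-1}(x),\,\mu_{t-1}(x)+\sqrt{\beta_t}\sigma_{t-1}(x)]$, which in particular gives the containment $C_t(x)\subseteq[\mu_{t-1}(x)-\sqrt{\beta_t}\sigma_{t-1}(x),\,\mu_{t-1}(x)+\sqrt{\beta_t}\sigma_{t-1}(x)]$, and the latter interval has width $2\sqrt{\beta_t}\sigma_{t-1}(x)$. Combining this with the previous paragraph, $2\sqrt{\beta_t}\sigma_{t-1}(x)\ge \mathrm{width}(C_t(x))>2\varepsilon$, so $\sqrt{\beta_t}\sigma_{t-1}(x)>\varepsilon$, as claimed. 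This argument uses nothing about the particular policy, since $U_t$ is assembled before the call to ASP in Line~\ref{asp}; it therefore holds for every arm selection policy function. The base case $t=1$ is covered automatically because $C_0(x)=(-\infty,\infty)$, so $C_1(x)$ equals exactly the $\sqrt{\beta_1}$-interval centered at $\mu_0(x)$.

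I do not expect a real obstacle here: the statement is a direct consequence of the membership test defining $U_t$ together with the containment $C_t(x)\subseteq[\mu_{t-1}(x)-\sqrt{\beta_t}\sigma_{t-1}(x),\,\mu_{t-1}(x)+\sqrt{\beta_t}\sigma_{t-1}(x)]$. The only subtlety worth spelling out is bookkeeping of the strict versus non-strict inequalities in the two failed tests, arranged so that we obtain the \emph{strict} bound $\sqrt{\beta_t}\sigma_{t-1}(x)>\varepsilon$ rather than a non-strict one --- this strict form is what will be needed when the lemma is plugged into the proof of Theorem~\ref{coro:wcen}.
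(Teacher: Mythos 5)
Your proof is correct and follows essentially the same route as the paper: both arguments extract $\min(C_t(x))<h-\varepsilon$ and $\max(C_t(x))\ge h+\varepsilon$ from the failed membership tests and use the containment $C_t(x)\subseteq[\mu_{t-1}(x)-\sqrt{\beta_t}\sigma_{t-1}(x),\,\mu_{t-1}(x)+\sqrt{\beta_t}\sigma_{t-1}(x)]$; the paper sums the two resulting inequalities while you phrase it as a comparison of interval widths, which is the same computation. Your handling of the strict versus non-strict inequalities matches the paper's as well.
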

\begin{proof}
  Assume that $x\in U_t$.
  Then,
  \begin{align*}
  \min(C_t(x))< h-\varepsilon \text{ and }
  \max(C_t(x))\geq   h+\varepsilon
\end{align*}
hold. Thus,
\begin{align*}
\mu_{t-1}(x)-\sqrt{\beta_t}\sigma_{t-1}(x)< h-\varepsilon \text{ and }
\mu_{t-1}(x)+\sqrt{\beta_t}\sigma_{t-1}(x)\geq  h+\varepsilon
\end{align*}
hold.
Therefore,
\begin{align*}
-\mu_{t-1}(x)+\sqrt{\beta_t}\sigma_{t-1}(x)+h> \varepsilon \text{ and  }
\mu_{t-1}(x)+\sqrt{\beta_t}\sigma_{t-1}(x)-h\geq  \varepsilon
\end{align*}
hold.
  By summing both sides, we obtain
\[
2\sqrt{\beta_t}\sigma_{t-1}(x)>2\varepsilon.
\]
\end{proof}

\begin{lemma}[Lemma 5.3 of \citep{gpucb}]\label{APinfogain}
	For $ \mathbf{y}_t = (y_1,\dots,y_t)^\mathsf{T}$ and\\ $\mathbf{f}_t = (f(x_1),\dots,f(x_t))^\mathsf{T}$,
	\[
	I(\mathbf{y}_t; \mathbf{f}_t)  = \frac{1}{2}\sum_{i=1}^{t}\log(1+\sigma^{-2}\sigma_{i-1}^{2}(x_i)).
	\]
\end{lemma}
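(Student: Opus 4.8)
The plan is to expand the mutual information into per-step contributions via the chain rule for differential entropy, exploiting the fact that every distribution in sight is Gaussian, so that each entropy is $\tfrac12\log\bigl(2\pi e\cdot(\text{variance})\bigr)$ and the variances are exactly the GP predictive variances.

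First I would write $I(\mathbf{y}_t;\mathbf{f}_t)=H(\mathbf{y}_t)-H(\mathbf{y}_t\mid\mathbf{f}_t)$ directly from the definition given in the paper. Since $y_i=f(x_i)+\eta_i$ with the $\eta_i\sim N(0,\sigma^2)$ mutually independent and independent of $\mathbf{f}_t$, conditioning on $\mathbf{f}_t$ makes the $y_i$ independent with $y_i\mid\mathbf{f}_t\sim N(f(x_i),\sigma^2)$, so $H(\mathbf{y}_t\mid\mathbf{f}_t)=\sum_{i=1}^t H(\eta_i)=\tfrac{t}{2}\log(2\pi e\sigma^2)$. Next, by the chain rule $H(\mathbf{y}_t)=\sum_{i=1}^t H(y_i\mid y_1,\dots,y_{i-1})$. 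The key step is to identify the conditional law of $y_i$ given $\mathbf{y}_{i-1}$: by the GP posterior formulas~(\ref{mu_t})--(\ref{k_t}), $f(x_i)\mid\mathbf{y}_{i-1}\sim N\bigl(\mu_{i-1}(x_i),\sigma_{i-1}^2(x_i)\bigr)$, and adding the independent noise $\eta_i$ gives $y_i\mid\mathbf{y}_{i-1}\sim N\bigl(\mu_{i-1}(x_i),\sigma^2+\sigma_{i-1}^2(x_i)\bigr)$, hence $H(y_i\mid\mathbf{y}_{i-1})=\tfrac12\log\bigl(2\pi e(\sigma^2+\sigma_{i-1}^2(x_i))\bigr)$. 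Subtracting the two sums and telescoping the common $\log(2\pi e)$ and $\log\sigma^2$ terms yields $I(\mathbf{y}_t;\mathbf{f}_t)=\sum_{i=1}^t\tfrac12\bigl[\log(\sigma^2+\sigma_{i-1}^2(x_i))-\log\sigma^2\bigr]=\tfrac12\sum_{i=1}^t\log\bigl(1+\sigma^{-2}\sigma_{i-1}^2(x_i)\bigr)$, which is the claimed identity.

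As a cross-check I would note that the same formula can be obtained non-recursively: $I(\mathbf{y}_t;\mathbf{f}_t)=\tfrac12\log\det(\mathbf{I}+\sigma^{-2}\mathbf{K}_t)$ (from $H(\mathbf{y}_t)=\tfrac12\log\det(2\pi e(\mathbf{K}_t+\sigma^2\mathbf{I}))$ and the noise entropy above), and peeling off one coordinate at a time via the Schur complement contributes exactly $\log\bigl(1+\sigma^{-2}\sigma_{i-1}^2(x_i)\bigr)$ at step $i$, since $\sigma_{i-1}^2(x_i)$ is the conditional variance of $f(x_i)$ given the first $i-1$ observations.

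The one point requiring care --- and the main obstacle --- is that in Algorithm GPCB[ASP] the query point $x_i$ is chosen adaptively as a function of $\mathbf{y}_{i-1}$, so ``$y_i\mid\mathbf{y}_{i-1}$'' must be understood as conditioning jointly on the past observations and on the resulting (deterministic) choice of location. This is harmless because $\sigma_{i-1}^2(x_i)$ depends only on the kernel and on the locations $x_1,\dots,x_i$, not on the observed values; the per-step entropy identity therefore holds verbatim for every realized location sequence, and the lemma is ultimately invoked against $\gamma_t=\max_{\mathbf{x}_t\in D}I(\mathbf{y}_t;\mathbf{f}_t)$, which already maximizes over all location sequences. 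For the statement exactly as written, with $\mathbf{x}_t$ regarded as fixed, no adaptivity issue arises and the computation above is complete.
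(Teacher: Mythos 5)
Your derivation is correct; the paper itself gives no proof of this lemma, citing it directly as Lemma~5.3 of \citet{gpucb}, and your chain-rule argument (Gaussian conditional entropies, telescoping the $\log(2\pi e)$ and $\log\sigma^2$ terms) is essentially the proof given in that reference. Your remark on adaptivity is also well taken and correctly resolved, since $\sigma_{i-1}^2(x_i)$ depends only on the locations and not on the observed values.
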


\begin{lemma}\label{lem:asp_epsilon}
	In Algorithm GPCB[ASP] with any arm selection policy function ASP that selects $x\in U_t$ at any time $t$,
	\[
	2\varepsilon\leq\sqrt{\frac{C_1\beta_t\gamma_t}{t}}
	\]
	holds for $t \geq 1 $ and $ C_1 = 8/\log(1+\sigma^{-2})$ if $U_t\neq \emptyset$.
\end{lemma}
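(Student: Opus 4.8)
The plan is to combine the per-step lower bound on the posterior variance coming from Lemma~\ref{lem:hatU_t} with the standard information-gain upper bound on the accumulated posterior variances (derived from Lemma~\ref{APinfogain}), and then simply solve the resulting inequality for $\varepsilon$. First I would record the structural facts that make the chain go through. Since the for loop only moves points from $U_{t-1}$ into $U_t$, we have $U_t\subseteq U_{t-1}\subseteq\cdots\subseteq U_0=D$, so $U_t\neq\emptyset$ forces $U_i\neq\emptyset$ for every $i\le t$; moreover the algorithm has executed iterations $1,\dots,t$ and at each step $i$ queried $x_i=\mathrm{ASP}(U_i,\dots)\in U_i$, using the hypothesis that ASP always returns a point of $U_i$. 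Also $\beta_t=2\log(|D|\pi^2t^2/6\delta)$ is increasing in $t$, hence $\beta_i\le\beta_t$ for $i\le t$.

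Next comes the per-step variance bound. Applying Lemma~\ref{lem:hatU_t} to $x_i\in U_i$ gives $\sqrt{\beta_i}\,\sigma_{i-1}(x_i)>\varepsilon$, i.e. $\sigma_{i-1}^2(x_i)>\varepsilon^2/\beta_i\ge\varepsilon^2/\beta_t$, and summing over $i=1,\dots,t$ yields $\sum_{i=1}^t\sigma_{i-1}^2(x_i)>t\varepsilon^2/\beta_t$. For the matching upper bound I would invoke Lemma~\ref{APinfogain} together with the elementary chord inequality $\log(1+\sigma^{-2}s)\ge s\log(1+\sigma^{-2})$ for $s\in[0,1]$ (the line below a concave function); since $\sigma_{i-1}^2(x_i)\le k(x_i,x_i)\le 1$ by the standing assumption $k(x,x)\le 1$, this gives $\sigma_{i-1}^2(x_i)\le \frac{1}{\log(1+\sigma^{-2})}\log(1+\sigma^{-2}\sigma_{i-1}^2(x_i))$, and summing, $\sum_{i=1}^t\sigma_{i-1}^2(x_i)\le\frac{2}{\log(1+\sigma^{-2})}I(\mathbf{y}_t;\mathbf{f}_t)\le\frac{2\gamma_t}{\log(1+\sigma^{-2})}$, the last step being the definition of $\gamma_t$.

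Chaining the two bounds gives $t\varepsilon^2/\beta_t<\frac{2\gamma_t}{\log(1+\sigma^{-2})}$; multiplying through by $4\beta_t/t$ gives $4\varepsilon^2<\frac{8\beta_t\gamma_t}{t\log(1+\sigma^{-2})}=\frac{C_1\beta_t\gamma_t}{t}$ since $C_1=8/\log(1+\sigma^{-2})$, and taking square roots yields $2\varepsilon\le\sqrt{C_1\beta_t\gamma_t/t}$, as claimed.

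The argument is essentially bookkeeping, and I expect the only place needing genuine care to be the direction of the information-gain inequality: one must pass from the equality in Lemma~\ref{APinfogain} to an \emph{upper} bound on $\sum\sigma_{i-1}^2(x_i)$, which requires the concave chord estimate and the hypothesis $k(x,x)\le 1$ (so that the variances lie in $[0,1]$). Everything else — the nesting $U_t\subseteq\cdots\subseteq D$, the monotonicity of $\beta_t$, and the membership $x_i\in U_i$ — is immediate from the algorithm's definition and Lemma~\ref{lem:hatU_t}.
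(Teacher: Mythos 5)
Your proposal is correct and follows essentially the same route as the paper: both use Lemma~\ref{lem:hatU_t} to get the per-step bound $\varepsilon\le\sqrt{\beta_i}\,\sigma_{i-1}(x_i)$ for $x_i\in U_i$ (justified by the nesting $U_t\subseteq U_{t-1}\subseteq\cdots$), the concave chord estimate $s\log(1+\sigma^{-2})\le\log(1+\sigma^{-2}s)$ for $s=\sigma_{i-1}^2(x_i)\in[0,1]$ (the paper phrases it via the constant $C_2=\sigma^{-2}/\log(1+\sigma^{-2})$), Lemma~\ref{APinfogain}, the monotonicity $\beta_i\le\beta_t$, and the definition of $\gamma_t$, arriving at $4t\varepsilon^2\le C_1\beta_t\gamma_t$. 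The only difference is cosmetic bookkeeping --- you pivot on $\sum_i\sigma_{i-1}^2(x_i)$ while the paper carries the $\log$ terms through the chain --- so no substantive comparison is needed.
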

\begin{proof}
  Since $U_t\neq \emptyset$, $U_i\neq\emptyset$ holds for $1\leq i \leq t$, which means 
  $x_i\in U_i$ for $1\leq i\leq t$. Then, an inequality $\varepsilon\leq \sqrt{\beta_i}\sigma_{i-1}(x_i)$ holds by Lemma~\ref{lem:hatU_t}. 
	Thus, similarly to Lemma 5.4 in \citep{gpucb}, from this inequality and an inequality $\sigma_{i-1}^2(x_i)\leq k(x_i,x_i)\leq 1$, it follows that for any $ i\geq 1 $
	\begin{eqnarray*}
		\varepsilon^2&\leq&\beta_i\sigma_{i-1}^{2}(x_i)\\
		&=&\beta_i\sigma^2(\sigma^{-2}\sigma_{i-1}^{2}(x_i))\\
		&\leq&\beta_i\sigma^2C_2\log(1+\sigma^{-2}\sigma_{i-1}^{2}(x_i)),
	\end{eqnarray*}
	where $ C_2 = \sigma^{-2}/\log(1+\sigma^{-2}) $. Thus, we get for any $ t\geq 1 $,
	\begin{eqnarray*}
		C_1\beta_t\gamma_t&\geq&C_1\beta_tI(\mathbf{y}_t;\mathbf{f}_t)  \text{\ \  (by definition of  $\gamma_t$ )}\\
		&=&4\sigma^2C_2\beta_t\sum_{i=1}^{t}\log(1+\sigma^{-2}\sigma_{i-1}^{2}(x_i)) \text{ (by Lemma~\ref{APinfogain})}\\
		&\geq&\sum_{i=1}^{t}4\sigma^2C_2\beta_i\log(1+\sigma^{-2}\sigma_{i-1}^{2}(x_i))\\
		&\geq&\sum_{i=1}^{t}4\varepsilon^2\\
		&=&4t\varepsilon^2.
	\end{eqnarray*}
\end{proof}

\noindent
\textbf{Proof of Theorem~\ref{coro:wcen}}
We prove that Algorithm  GPCB[ASP] terminates after receiving at most $T_1=T(2\epsilon)$ function values for any arm selection policy function ASP.
Assume that Algorithm GPCB[ASP] does not terminate after receiving $T_1$ function values.
	Then $U_t\neq\emptyset$ holds, thus by Lemma~\ref{lem:asp_epsilon},
	\[
	4\varepsilon^2\leq\frac{C_1 \hbox{} \beta_{T_1} \gamma_{T_1} }{T_1}
	\]
         should hold, but this contradicts the inequality that is derived from the definition of $T_1=T(2\varepsilon)$, that is,
        \[
	\frac{T_1}{\beta_{T_1}\gamma_{T_1}}> \frac{C_1}{4\epsilon^2}.
	\]
        Therefore, Algorithm GPCB[ASP] terminates after receiving at most $T_1$ function values. \qed

\section{Proof of Theorem~\ref{th:fcben}}\label{sec:SCUFCB}

In the following two lemmas, we let $Q_t(x)$ denote $[\mu_{t-1}(x)-\sqrt{\beta_t}\sigma_{t-1}(x),\mu_{t-1}(x)+\sqrt{\beta_t}\sigma_{t-1}(x)]$.
Note that $C_t(x)$ can be calculated as $C_t(x)=C_{t-1}(x)\cap Q_t(x)$.

\begin{lemma}\label{APbound_a_t}
	In Algorithm GPCB[FCB],  the following holds for any $ t\geq 1 $,
	\[
         a_t(x_t)\leq 2\sqrt{\beta_t}\sigma_{t-1}(x_t)-\varepsilon.
	\]
\end{lemma}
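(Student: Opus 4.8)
The plan is a direct two-case argument from the definition of $a_t$ for FCB, combined with two structural facts. The first fact is that the queried point is selected from the uncertain set, so $x_t\in U_t$; by the branching in Algorithm~\ref{cbca} this means $\min(C_t(x_t))<h-\varepsilon$ \emph{and} $\max(C_t(x_t))\ge h+\varepsilon$. The second fact is that $C_t(x_t)=C_{t-1}(x_t)\cap Q_t(x_t)\subseteq Q_t(x_t)$, which gives the endpoint bounds $\max(C_t(x_t))\le \mu_{t-1}(x_t)+\sqrt{\beta_t}\sigma_{t-1}(x_t)$ and $\min(C_t(x_t))\ge \mu_{t-1}(x_t)-\sqrt{\beta_t}\sigma_{t-1}(x_t)$. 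Everything else is bookkeeping.

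First I would treat the case $a_t(x_t)=\max(C_t(x_t))-h$. Here I bound the first term from above by $\mu_{t-1}(x_t)+\sqrt{\beta_t}\sigma_{t-1}(x_t)$, and I bound $h$ from below using the $U_t$-membership condition on the \emph{lower} endpoint:
\[
h>\min(C_t(x_t))+\varepsilon\ge \mu_{t-1}(x_t)-\sqrt{\beta_t}\sigma_{t-1}(x_t)+\varepsilon .
\]
Combining these,
\[
a_t(x_t)\le \bigl(\mu_{t-1}(x_t)+\sqrt{\beta_t}\sigma_{t-1}(x_t)\bigr)-h< 2\sqrt{\beta_t}\sigma_{t-1}(x_t)-\varepsilon .
\]

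Then I would handle the symmetric case $a_t(x_t)=h-\min(C_t(x_t))$, now using the $U_t$-membership condition on the \emph{upper} endpoint: $h\le \max(C_t(x_t))-\varepsilon\le \mu_{t-1}(x_t)+\sqrt{\beta_t}\sigma_{t-1}(x_t)-\varepsilon$, together with $-\min(C_t(x_t))\le -\mu_{t-1}(x_t)+\sqrt{\beta_t}\sigma_{t-1}(x_t)$; adding these two again yields $a_t(x_t)\le 2\sqrt{\beta_t}\sigma_{t-1}(x_t)-\varepsilon$. Since $a_t(x_t)$ is one of these two expressions, the claimed bound holds in all cases. I do not expect a genuine obstacle here; the only points requiring care are pairing each $U_t$-membership inequality with the opposite-side term (the lower-endpoint condition controls the $\max$-term case and the upper-endpoint condition controls the $\min$-term case) and keeping the strict/non-strict inequalities consistent so that the stated non-strict bound is obtained.
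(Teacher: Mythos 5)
Your proof is correct and is essentially the paper's argument: both rest on the facts that $x_t\in U_t$ forces $\max(C_t(x_t))-h\ge\varepsilon$ and $h-\min(C_t(x_t))>\varepsilon$, and that $C_t(x_t)\subseteq Q_t(x_t)$ has width at most $2\sqrt{\beta_t}\sigma_{t-1}(x_t)$. The paper simply compresses your two-case analysis into the single inequality $\max\{A,B\}\le A+B-\min\{A,B\}\le \max(C_t(x_t))-\min(C_t(x_t))-\varepsilon$, which is what your case split computes term by term.
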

\begin{proof}
	Note that
	\begin{align*}
	\max\{\max(C_t(x_t))-h, h-\min(C_t(x_t))\}
        \leq\max(C_t(x_t))-\min(C_t(x_t))-\epsilon
	\end{align*}
	holds because $\min\{\max(C_t(x))-h, h-\min(C_t(x))\}\geq \epsilon$ for $x\in U_t$.
	Since
        \[
        a_t(x_t)=\max\{\max(C_t(x_t))-h, h-\min(C_t(x_t))\}
        \]
        for ASP$=$FCB, we have $a_t(\hbox{} x_t)\geq \epsilon$ and 
	\begin{align*}
		a_t(x_t)\leq&\max(C_t(x_t))-\min(C_t(x_t))-\epsilon\\
		\leq&\max(Q_t(x_t))-\min(Q_t(x_t))-\epsilon\\
		=&2\sqrt{\beta_t}\sigma_{t-1}(x_t)-\epsilon.
	\end{align*}
\end{proof}

\begin{lemma}\label{APnonincreasing}
	In Algorithm GPCB[FCB], $a_t(x_t)$ is nonincreasing in $t$.
\end{lemma}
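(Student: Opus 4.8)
The plan is to derive the claim from two simple monotonicity facts built into Algorithm~GPCB[FCB]: confidence intervals can only shrink as $t$ grows, and the uncertain set can only lose elements. First, note that at step $t+1$ the algorithm sets $C_{t+1}(x)=C_t(x)\cap Q_{t+1}(x)$ for every $x$ that survives, so $C_{t+1}(x)\subseteq C_t(x)$ and hence $\max(C_{t+1}(x))\leq\max(C_t(x))$ and $\min(C_{t+1}(x))\geq\min(C_t(x))$. Taking the maximum of the two quantities appearing in the FCB score, this gives $a_{t+1}(x)\leq a_t(x)$ for every such $x$, in particular for $x=x_{t+1}$.

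Second, I would read off from the while loop in Algorithm~\ref{cbca} that $U_{t+1}$ is assembled only from points of $U_t$ (each $x\in U_t$ is sorted into $\hat H$, into $\hat L$, or retained in $U_{t+1}$), so $U_{t+1}\subseteq U_t$; moreover membership $x\in U_t$ forces $\min(C_t(x))<h-\varepsilon$ and $\max(C_t(x))\geq h+\varepsilon$, so $C_t(x)$ is a genuine nonempty interval and $a_t(x)$ is well defined on $U_t$. Since $x_t=\argmax_{x\in U_t}a_t(x)$ by \eqref{ASP:argmax} and $x_{t+1}\in U_{t+1}\subseteq U_t$, we obtain $a_t(x_{t+1})\leq a_t(x_t)$. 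Chaining the two inequalities yields
\[
a_{t+1}(x_{t+1})\leq a_t(x_{t+1})\leq a_t(x_t),
\]
which is exactly the assertion.

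The statement is purely deterministic, so no appeal to Corollary~\ref{APcorrect-assumptionen} is needed. The only thing requiring a little care — and the closest thing to an obstacle — is the index bookkeeping: one must check that $a_t$ is evaluated with the same interval family $C_t$ that is used when $x_t$ is selected, and that the minimizer/maximizer domains nest so that $x_{t+1}$ actually lies in the set over which $x_t$ is optimal; both follow immediately from the order of operations inside the loop.
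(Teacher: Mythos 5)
Your proof is correct and follows essentially the same route as the paper's: both arguments combine the pointwise monotonicity $a_{t+1}(x)\leq a_t(x)$ (from $C_{t+1}(x)=C_t(x)\cap Q_{t+1}(x)\subseteq C_t(x)$) with the nesting $U_{t+1}\subseteq U_t$ of the sets over which the maximum is taken. The only cosmetic difference is that you chain the inequalities through the specific point $x_{t+1}$ while the paper chains them through $\max_{x\in U_t}a_t(x)\leq\max_{x\in U_{t-1}}a_{t-1}(x)$; these are the same argument.
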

\begin{proof}
 By the  definition of $a_t$, 
	\begin{eqnarray*}
		a_t(x)&=&\max\{\max(C_t(x))-h, h-\min(C_t(x))\}\\
		&=&\max\{\max(C_{t-1}(x)\cap Q_t(x))-h,h-\min(C_{t-1}(x)\cap Q_t(x))\}\\
		&\leq&\max\{\max(C_{t-1}(x))-h, h-\min(C_{t-1}(x))\}\\
		&=&a_{t-1}(x)
	\end{eqnarray*}
	for $x\in U_t$ and $t\geq 1$. Therefore,
	\begin{align*}
	a_t(x_t)=\max_{x\in U_t}a_t(x)
	\leq\max_{x\in U_t}a_{t-1}(x)
	\leq\max_{x\in U_{t-1}}a_{t-1}(x)
	=a_{t-1}(x_{t-1}),
	\end{align*}
	because $U_{t}\subseteq U_{t-1}$. 
\end{proof}

\begin{lemma}\label{APbound_at2}
	In Algorithm GPCB[FCB],
	\[
	a_t(x_t)+\epsilon\leq\sqrt{\frac{C_1\beta_t\gamma_t}{t}}
	\]
	holds for $t \geq 1 $ and $ C_1 = 8/\log(1+\sigma^{-2})$.
\end{lemma}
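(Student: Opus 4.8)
The plan is to follow the same template as the proof of Lemma~\ref{lem:asp_epsilon} (itself a variant of Lemma~5.4 of \citep{gpucb}), but to feed in the sharper, FCB-specific per-step bound of Lemma~\ref{APbound_a_t} in place of the crude estimate $\varepsilon\le\sqrt{\beta_i}\sigma_{i-1}(x_i)$. Throughout I work under the implicit assumption that $x_t$ is actually selected, i.e.\ $U_t\neq\emptyset$; since $U_i\subseteq U_{i-1}$, this forces $U_i\neq\emptyset$ and hence $x_i\in U_i$ for every $1\le i\le t$, so that $a_i(x_i)$ is defined and, by the proof of Lemma~\ref{APbound_a_t}, satisfies $a_i(x_i)\ge\varepsilon>0$.

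First I would derive, for each fixed $i$ with $1\le i\le t$, a bound on $(a_t(x_t)+\varepsilon)^2$ in terms of $\log(1+\sigma^{-2}\sigma_{i-1}^2(x_i))$. By Lemma~\ref{APnonincreasing}, $a_t(x_t)\le a_i(x_i)$, and since both quantities are nonnegative after adding $\varepsilon$, squaring gives $(a_t(x_t)+\varepsilon)^2\le(a_i(x_i)+\varepsilon)^2$. Lemma~\ref{APbound_a_t} then yields $(a_t(x_t)+\varepsilon)^2\le 4\beta_i\sigma_{i-1}^2(x_i)$. Now I use $\sigma_{i-1}^2(x_i)\le k(x_i,x_i)\le 1$, so that $\sigma^{-2}\sigma_{i-1}^2(x_i)\in[0,\sigma^{-2}]$, together with the elementary inequality $s\le C_2\log(1+s)$ for $0\le s\le\sigma^{-2}$, where $C_2=\sigma^{-2}/\log(1+\sigma^{-2})$ (this holds because $s\mapsto s/\log(1+s)$ is increasing on $(0,\infty)$, hence bounded by its value at $s=\sigma^{-2}$). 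Applying this with $s=\sigma^{-2}\sigma_{i-1}^2(x_i)$ gives
\[
(a_t(x_t)+\varepsilon)^2\le 4\beta_i\sigma^2 C_2\log\!\bigl(1+\sigma^{-2}\sigma_{i-1}^2(x_i)\bigr).
\]

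Next I would sum this inequality over $i=1,\dots,t$, use $\beta_i\le\beta_t$ (as $\beta_t=2\log(|D|\pi^2t^2/6\delta)$ is increasing in $t$), and then invoke Lemma~\ref{APinfogain} and the definition of $\gamma_t$ to get $\sum_{i=1}^{t}\log(1+\sigma^{-2}\sigma_{i-1}^2(x_i))=2I(\mathbf{y}_t;\mathbf{f}_t)\le 2\gamma_t$. This produces $t(a_t(x_t)+\varepsilon)^2\le 8\sigma^2 C_2\beta_t\gamma_t$, and since $8\sigma^2 C_2=8/\log(1+\sigma^{-2})=C_1$ the right-hand side is exactly $C_1\beta_t\gamma_t$; dividing by $t$ and taking square roots (legitimate because $a_t(x_t)+\varepsilon>0$) gives the claim. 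There is no serious obstacle here: the only things requiring a little care are the bookkeeping that makes the constant collapse to precisely $C_1$, and the scalar inequality $s\le C_2\log(1+s)$ on $[0,\sigma^{-2}]$; everything else is just chaining Lemmas~\ref{APbound_a_t}, \ref{APnonincreasing}, and \ref{APinfogain} in sequence.
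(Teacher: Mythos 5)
Your proof is correct and follows essentially the same route as the paper's: chain Lemma~\ref{APbound_a_t} with the elementary bound $s\le C_2\log(1+s)$ on $[0,\sigma^{-2}]$, sum over $i$, and control the sum by $\gamma_t$ via Lemma~\ref{APinfogain}. The only cosmetic difference is that you apply the monotonicity of Lemma~\ref{APnonincreasing} termwise at the outset, which lets you skip the Cauchy--Schwarz step the paper uses before invoking that same lemma; both versions are valid and yield the identical constant $C_1$.
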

\begin{proof}
	Similarly to Lemma 5.4 in \citep{gpucb}, from Lemma~\ref{APbound_a_t} and $\sigma_{i-1}^2(x_i)\leq k(x_i,x_i)\leq 1$ it follows that for any $ i\geq1 $
	\begin{eqnarray*}
		(a_i(x_i)+\epsilon)^2&\leq&4\beta_i\sigma_{i-1}^{2}(x_i)\\
		&=&4\beta_i\sigma^2(\sigma^{-2}\sigma_{i-1}^{2}(x_i))\\
		&\leq&4\beta_i\sigma^2C_2\log(1+\sigma^{-2}\sigma_{i-1}^{2}(x_i)),
	\end{eqnarray*}
	where $ C_2 = \sigma^{-2}/\log(1+\sigma^{-2}) $. Thus, we get for any $ t\geq 1 $,
	\begin{eqnarray*}
		C_1\beta_t\gamma_t&\geq&C_1\beta_tI(\mathbf{y}_t;\mathbf{f}_t)  \text{\ \  (by definition of  $\gamma_t$ )}\\
		&=&4\sigma^2C_2\beta_t\sum_{i=1}^{t}\log(1+\sigma^{-2}\sigma_{i-1}^{2}(x_i)) \text{\ \  (by Lemma~\ref{APinfogain})}\\
		&\geq&\sum_{i=1}^{t}4\sigma^2C_2\beta_i\log(1+\sigma^{-2}\sigma_{i-1}^{2}(x_i))\\
		&\geq&\sum_{i=1}^{t}(a_i(x_i)+\epsilon)^2\\
		&\geq&\frac{1}{t}\left(\sum_{i=1}^{t}(a_i(x_i)+\epsilon)\right)^2  \text{\ \  (by Cauchy-Schwarz)}\\
		&\geq&\frac{1}{t}(t(a_t(x_t)+\epsilon))^2  \text{\ \  (by Lemma~\ref{APnonincreasing})}\\
		&\geq&t(a_t(x_t)+\epsilon)^2.
	\end{eqnarray*}
\end{proof}

\noindent
\textbf{Proof of Theorem~\ref{th:fcben}}
By Corollary~\ref{APcorrect-assumptionen}, with probability at least $1-\delta$
\begin{equation}
  f(x)\in C_t(x) \text{ for any } x\in D \text{ and } t\geq 1 \label{APassumption}
  \end{equation}
holds. Therefore, we only have to prove that  GPCB[FCB] terminates after receiving $T_1=T(\Delta_w+\varepsilon)$ function values when Condition (\ref{APassumption}) holds.
        Consider the case with $|H_{h+\varepsilon}|/|D|\geq w$.
	Assume that GPCB[FCB] does not terminate after receiving $T_1$ function values.
	Since GPCB does not terminate at $t=T_1$, there exists $x\in H_{h+\varepsilon}\cap U_{T_1}$ with $f(x)-h\geq \Delta_w$.
	For such $x$,
\[
	a_{T_1}(x_{T_1})\geq a_{T_1}(x)\geq \max(C_{T_1}(x))-h\geq f(x)-h\geq \Delta_w 
\]
	by Condition~(\ref{APassumption}). Thus, by Lemma~\ref{APbound_at2}, we have
	\[
	(\Delta_w+\varepsilon)^2\leq \frac{C_1\beta_{T_1}\gamma_{T_1}}{T_1},
	\]
	which contradicts the inequality that is derived from the definition of $T_1=T(\Delta_w+\varepsilon)$, that is,
	\[
	\frac{T_1}{\beta_{T_1}\gamma_{T_1}}> \frac{C_1}{(\Delta_w+\varepsilon)^2}.
	\]
        Therefore, GPCB[FCB] terminates after at most $T_1$ iterations.
        We can similarly prove the theorem statement in the case with $|L_{h-\varepsilon}|/|D|> 1-w$.   \qed

\section{Cancer Index Image Dataset}\label{sec:rwdataset}

 This section describes the details of the real-world dataset that we use in our experiment.

\subsection{Dataset}

The dataset contains 9 cancer-index images of follicular thyroid cells.
Five of them are images of FTC-133 (human follicular thyroid carcinoma cells),  
and the rest four are images of Nthy-ori 3-1 (human follicular thyroid epithelial cells).
The size of each image is 240 $\times$ 400 pixels and each pixel has a cancer index value,
which is calculated by the \emph{cancer index function} that maps from \emph{Raman spectra} of the point to a real value in $\{0.01n\mid n=0,1,\dots,100\}$.

\subsubsection{Raman Spectra and Raman Image}

 A \emph{Raman spectrum} at one pixel consists of Raman scattering intensities at different 840 wavenumbers ranging from 542 cm$^{-1}$ to 3087 cm$^{-1}$.
A \emph{Raman image} is an image of 240 $\times$ 400 pixels whose pixel has 840  Raman intensities.
The spectral processing procedure described in ~\ref{spectralprocessing} is applied as in the literatures \citep{zoladek2010label,horiue2020raman}. 

\subsubsection{Cancer Index Function}

 We use a random forest classifier with 100 decision trees as a cancer index function.
The cancer index is the ratio of cancer voters among 100 component decision trees.
For the pixels in each Raman image, the cancer index function used to calculate their cancer indices is trained in   a leave-one-out manner
using the other 8 Raman images whose pixel is labeled ``cancer'', ``non-cancer'' or ``background''.
The pixel Labeling is done by the process described in ~\ref{trainingdata}.

\subsection{Pixel Labeling}\label{trainingdata}

In an image that is composed of cell and background regions, spectra from cells typically show high intensity at high-wavenumber region, or the region from 2800 $\mbox{cm}^{-1}$ to 3000 $\mbox{cm}^{-1}$. 
 We used a clustering algorithm to spatially separate the cell region and  background region based on this fact. 
We segmented image data into 3 regions by applying a 3-means clustering algorithm for the spectra of the high-wavenumber region. We named these regions  ``cell'', ``marginal'', and ``background'' in the decreasing order of the mean intensity at the high-wavenumber region. 
The spectra in the ``cell'' region in FTC-133 (Nthy-ori 3-1) image are labeled ``cancer'' (``noncancer'') and spectra in the ``background'' region are labeled ``background'' regardless of whether it is in the FTC-133 image data or the Nthy-ori 3-1 image data. 

\subsection{Raman Spectral Processing Procedure}\label{spectralprocessing}

First, we detect the spectra affected by cosmic  rays and ignore them. 
Very high intensities are detected in the small spot in a CCD sensor when the cosmic rays pass through the CCD sensor. We exclude the cosmic-ray-influenced spectra for learning since this is not a signal from a sample. 
We consider the spectra with very high intensity (larger than mean intensity + 10 $\times$ standard deviation) at any  wavenumber to be affected by a cosmic ray since these spectra  have very high intensity at some  wavenumber. 
To increase the S/N ratio of the image data, singular-value decomposition (SVD) is performed for the spectra. 
We reconstruct the image data using the top 30 significant components of SVD.
After that, for fluorescence subtraction to correct the  baseline of spectra,  the recursive polynomial fitting algorithm \citep{lieber2003automated} of six degrees is performed. 
Finally, the spectra are normalized so that the total intensity is 1. 

\end{document}